\definecolor{darkgreen}{rgb}{0,0.5,0}
\definecolor{darkred}{rgb}{0.7,0,0}
\definecolor{teal}{rgb}{0.3,0.8,0.8}
\definecolor{orange}{rgb}{1.0,0.5,0.0}
\definecolor{purple}{rgb}{0.8,0.0,0.8}
\newcommand{\kibitz}[2]{\ifnum\Comments=1{\textcolor{#1}{\textsf{\footnotesize #2}}}\fi}
\definecolor{Gray}{gray}{0.9}
\newcommand{\newedit}{\color{black}}
\newcommand{\finaledit}{\color{black}}
\def\eqref#1{equation~\ref{#1}}
\def\1{\bm{1}}
\DeclareMathAlphabet{\mathsfit}{\encodingdefault}{\sfdefault}{m}{sl}
\SetMathAlphabet{\mathsfit}{bold}{\encodingdefault}{\sfdefault}{bx}{n}
\DeclareMathOperator*{\argmax}{arg\,max}
\DeclareMathOperator*{\argmin}{arg\,min}
\newcommand{\ie}{\emph{i.e.}}
\newcommand{\eg}{\emph{e.g.}}
\begin{document}
\title{Offline Minimax Soft-Q-learning Under Realizability and Partial Coverage }
\author[1]{Masatoshi Uehara\thanks{uehara.masatoshi@gene.com} \footnote{This work is done when the author was at Cornell University.}} 
\author[1]{Nathan Kallus \thanks{kallus@cornell.edu}}
\author[2]{Jason D. Lee\thanks{jasonlee@princeton.edu  }}
\author[1]{Wen Sun \thanks{ws455@cornell.edu}}

\affil[1]{Genentech}
\affil[2]{Princeton University}
\date{}

\maketitle

\begin{abstract}
In offline RL, we have no opportunity to explore so we must make assumptions that the data is sufficient to guide picking a good policy, and we want to make these assumptions as harmless as possible.
In this work, we propose value-based algorithms for offline RL with PAC guarantees under just partial coverage, specifically, coverage of just a single comparator policy, and realizability of the soft (entropy-regularized) Q-function of the single policy and a related function defined as a saddle point of certain minimax optimization problem. This offers refined and generally more lax conditions for offline RL. We further show an analogous result for vanilla Q-functions under a soft margin condition. To attain these guarantees, we leverage novel minimax learning algorithms and analyses to accurately estimate either soft or vanilla Q-functions with strong $L^2$-convergence guarantees. Our algorithms' loss functions arise from casting the estimation problems as nonlinear convex optimization problems and Lagrangifying. Surprisingly we handle partial coverage even without explicitly enforcing pessimism. 
\end{abstract}

\vspace{-0.8cm}

\vspace{-0.15cm}

\section{Introduction}\label{sec:intro}

In offline Reinforcement Learning (RL), we must learn exclusively from offline data and are unable to actively interact with the environment \citep{levine2020offline}. Offline RL has garnered considerable interest in a range of applications %
where experimentation may be prohibitively costly or risky.

Offline RL is generally based on two types of assumptions:  sufficient coverage in the offline data and sufficient function approximation. For instance, classical Fitted-Q-iteration \citep{antos2008learning,chen2019information} requires (a) full coverage in the offline data, $\max_{(s,a)} d_{\pi,\mu_0}(s,a)/P_{\pi_b}(s,a)<\infty$ for any policy $\pi$ where $P_{\pi_b}(s,a)$ is the offline data's distribution on the states and actions and $d_{\pi,\mu_0}(s,a)$ is the state-action occupancy distribution under a policy $\pi$ and initial-state distribution $\mu_0(s)$; (b) realizability of the $Q^*$-function in a hypothesis class; and (c) Bellman completeness, \ie, the Bellman operator applied to any function in the hypothesis class remains in the class. Full coverage (a) and Bellman completeness (c) can be particularly stringent because offline data is often insufficiently exploratory  and Bellman completeness significantly restricts transition dynamics.

\begin{table}[!t]\label{tab:summary}
    \centering
      \caption{Summary of partial-coverage-type guarantees with model-free general function approximation. Here, $w^{\star}\coloneqq d_{\pi^{\star},\mu_0}/P_b$ where $d_{\pi^{\star},\mu_0}$ is the occupancy distribution under the optimal policy $\pi^{\star}$ starting from $\mu_0$ and $P_b$ is the distribution over the offline data. A function $\tilde w^{\star}_{\alpha}$ is a regularized marginal density ratio that satisfies $\tilde w^{\star}_0=w^{\star}$. Functions $q^{\star},q^{\star}_{\alpha},q_{\pi}$ are the optimal $Q^{\star}$-function, the soft Q-function, and the Q-function under a policy $\pi$, respectively. Functions $v^{\star}_{\alpha},l^{\star}_{\alpha}$ are Lagrange multipliers of specific minimax optimization problems. 
      The operator $\Bcal^{\pi}$ is a Bellman operator under a policy $\pi$. Function classes $\Wcal,\Qcal,\Lcal,\Vcal$ consist of functions that map states (and actions) to real numbers. Note the guarantees provided by \citet{jiang2020minimax,xie2021bellman} are more general than the below in that the output policy can compete with any policy in the policy class $\Pi$. For simplicity, we set the comparator policy to be the optimal policy $\pi^{\star}$ in this table. {Note that other studies \citep{ozdaglar2023revisiting,rashidinejad2022optimal,zhu2023importance} proposing model-free general function approximation under partial coverage rely on the completeness-type assumption as in  \citep{xie2021bellman} or realizability for any $\pi$ as in \citet{jiang2020minimax}. }
      }
      
              \scalebox{1.00}{
    \begin{tabular}{ccccc}
    \toprule
       & Primary Assumptions   \\ \midrule
    \citet{jiang2020minimax}   &  $w^{\star}\in \Wcal,\,q_{\pi}\in \Qcal\;\forall\pi\in \Pi$     \\ 
     \citet{xie2021bellman}      & $q_{\pi}\in \Qcal,\, \Bcal^{\pi} \Qcal \subset \Qcal\;\forall\pi \in \Pi$\\ 
   \citet{zhan2022offline}        & $\tilde w^{\star}_{\alpha}\in \Wcal,\,v^{\star}_{\alpha}\in \Vcal  $  &  \\
    \rowcolor{Gray}  MSQP   & 
 $q^{\star}_{\alpha}\in \Qcal,\,l^{\star}_{\alpha}\in \Lcal$  \\
  \citet{chen2022offline}      & Hard margin$,\,w^{\star}\in \Wcal,\,q^{\star} \in \Qcal $  \\ 
     \rowcolor{Gray} MQP & 
 Soft margin$,\,q^{\star}\in \Qcal,\,l^{\star}\in \Lcal$  \\ 
         \bottomrule
    \end{tabular}
    }

\end{table}

To overcome these challenges, we here propose algorithms with guarantees under realizability of single functions and refined partial coverage of single policies, and without Bellman completeness. We tackle this by introducing two novel value-based algorithms. The first algorithm, MSQP (mimimax soft-Q-learning with penalization), comprises of two steps: learning soft Q-functions {\newedit (a.k.a., entropy-regularized Q-functions, as defined in \citealp{fox2015taming,schulman2017equivalence})} from offline data, and using the softmax policies of the learned soft Q-functions. The second algorithm, MQP (mimimax Q-learning with penalization), consists of two steps: learning standard Q-functions from offline data and employing the greedy policy of  the learned Q function on the offline data. 

Using the above-mentioned two algorithms, we attain PAC guarantees under partial coverage and realizability, yet without Bellman completeness. In particular, in MSQP using soft Q-functions, we ensure strong performance under the realizability of $q^{\star}_{\alpha}$, $l^{\star}_{\alpha}$ and the (density-ratio-based) partial coverage $\max_{(s,a)}d_{\pi^\star_{\alpha},\mu_0}(s,a)/P_b(s,a)<\infty$. Here $q^{\star}_{\alpha}$ is a soft Q-function, $l^{\star}_{\alpha}$ is a function that possesses a certain dual relation to $q^{\star}_{\alpha}$, $\pi^{\star}_{\alpha}$ is the soft-max optimal policy, and {\newedit $\alpha$ is the temperature parameter for the entropy-regularization.} Notably, $\max_{(s,a)}d_{\pi^{\star}_{\alpha},\mu_0}(s,a)/P_b(s,a)<\infty$ is significantly less stringent than the uniform coverage in that the coverage is only imposed against a policy $\pi^{\star}_{\alpha}$. In MQP using Q-functions, we similarly ensure strong performance under a soft margin, the realizability of $q^{\star}$, $l^{\star}$, and the partial coverage $\max_{(s,a)}d_{\pi^{\star},\mu_0}(s,a)/P_b(s,a)<\infty$. Here $q^{\star}$ is the vanilla Q-function and $l^{\star}$ is a function that possesses a certain dual relation to $q^{\star}$, and $\pi^{\star}$ is the usual optimal policy. Note the soft margin is introduced to allow realizability on standard Q-functions rather than soft Q-functions. However, the conditions $\max_{(s,a)}d_{\pi^{\star}_{\alpha},\mu_0}(s,a)/P_b(s,a)<\infty$ or $\max_{(s,a)}d_{\pi^{\star},\mu_0}(s,a)/P_b(s,a)<\infty$ may still be strong as these marginal density ratios may not exist in large-scale MDPs. For example, this condition is easily violated when the initial distribution $\mu_0$ is not covered by $P_b$ (\ie, $\max_{s}\mu_0(s)/P_b(s)=\infty$ where $P_b(s)\coloneqq \sum_a P_b(s,a)$). Therefore, as an additional innovation, in our algorithms we can further relax these density-ratio-based partial coverage conditions. Specifically, we can demonstrate results under a \emph{refined partial coverage}, which is adaptive to Q-function classes, even when the initial distribution $\mu_0$ is not covered by $P_b$.\footnote{{\newedit Note $\mu_0$ and $P_b$ could be generally different even in the contextual bandit setting. This important setting is often considered in the literature on external validity/transportability in causal inference, as results of randomized clinical trials cannot be directly transported because covariate distributions in offline data and target data are different \citep{cole2010generalizing,pearl2022external,dahabreh2019generalizing}.}}

The primary challenge lies in the design of loss functions for effectively learning soft Q-functions and vanilla Q-functions from offline data \emph{without Bellman completness}. To tackle this, we devise new minimax loss functions with certain regularization terms to achieve favorable $L^2$-convergence rates on the offline data {\newedit (\ie, in terms of $\EE_{(s,a)\sim P_b}[\{\hat q_{\alpha}-q\}^2(s,a)]$ given an estimator $\hat q$).} This result serves as the key building block for obtaining refined partial coverage under realizability and is of independent interest in its own right. {\finaledit Existing results are often constrained to specific models, such as linear models \citep{shi2022statistical}, or they require Bellman completeness \citep{antos2008learning,chen2022well,chen2019information}. In contrast, our guarantee is applicable to any function approximation method, without the need for Bellman completeness. To the best of our knowledge, this is the first guarantee of its kind. }

Our work exhibits marked improvements over two closely related studies \citep{zhan2022offline,chen2022offline}. Similar to our work, they propose algorithms that operate under the realizability of specific functions and partial coverage, yet without Bellman completeness. \citet{zhan2022offline} ensures a PAC guarantee under (a') partial coverage in the offline data $\max_{(s,a)}d_{\tilde \pi^{\star}_{\alpha},\mu_0}(s,a)/P_b(s,a)<\infty$ where $\tilde \pi^{\star}_{\alpha}$ is a specific near-optimal policy under the regularization, which differs from the soft optimal policy, and (b') realizability of $d_{\tilde \pi^{\star}_{\alpha},\mu_0}/P_b$ and the regularized value function. However, unlike MSQP, it is unclear how to refine the abovementioned coverage, \ie, the guarantee could be vacuous when the initial distribution is not covered by offline data. A similar guarantee, but without regularization, is obtained under the additional hard margin (a.k.a., gap) condition in \citet{chen2022offline}. Our soft margin is a strict relaxation of the hard margin, which is important because, unlike the soft margin, the hard margin generally does not hold in continuous state spaces and involves very large constants in discrete state spaces. Lastly, although \citet{chen2022offline,zhan2022offline} use completely different algorithms and attain guarantees for regularized value-functions and non-regularized value functions, respectively, our guarantee can afford guarantees for regularized and non-regularized value-functions in a \emph{unified} manner since MQP can be seen as a limit of MSQP when $\alpha$ goes to $0$. %

Our contributions are summarized below and in Table~\ref{tab:summary}.

\begin{enumerate} 
    \item We establish that the optimal policy can be learned under partial coverage and realizability of the optimal soft Q-function and its dual. Notably, we abstain from the use of possibly stronger conditions in offline RL, such as full coverage, Bellman completeness, and uniform realizability over the policy class (such as $q_{\pi}\in \Qcal$ for any $\pi$ as in \citealp{jiang2020minimax}). In particular, while a similar guarantee is provided in \citet{zhan2022offline}, our partial coverage guarantee has an advantage in that we are able to potentially accommodate scenarios where the initial distribution is not covered by $P_b$. This is feasible because our algorithm is value-based in nature, which allows us to leverage the structure of the Q-function classes and refine the coverage condition.  

    \item  We demonstrate that the optimal policy can be learned under partial coverage, realizability of the Q-function and its dual, and a soft margin. %
    While a similar guarantee is obtained in \citet{chen2022offline}, our guarantee has the advantage that the soft margin is significantly less stringent than the hard margin required therein.  
\end{enumerate}

\subsection{Related Works} \label{subsec:related}

We summarize related works as follows. Further related works is discussed in \pref{sec:related}.

\paragraph{Offline RL under partial coverage.}

There is a growing number of results under partial coverage following the principle of pessimism in offline RL \citep{Yu2020,Kidambi2020}. In comparison to works that focus on tabular \citep{RashidinejadParia2021BORL,li2022pessimism,shi2022pessimistic,yin2021towards} or linear models \citep{JinYing2020IPPE,chang2021mitigating,zhang2022corruption,nguyen2022instance,bai2022pessimistic}, our emphasis is on general function approximation \citep{jiang2020minimax,uehara2021pessimistic,xie2021bellman,zhan2022offline,zhu2023importance,rashidinejad2022optimal,zanette2022bellman,ozdaglar2023revisiting}. Among them, we specifically focus on model-free methods. The representative work is summarized in Table~\ref{tab:summary}.

\paragraph{Soft (entropy-regularized) Q-functions.} 
Soft Q-functions are utilized in various contexts in RL \citep{geist2019theory,neu2017unified}. They have been shown to improve performance in online RL settings, as demonstrated in Soft Q-Learning \citep{fox2015taming,schulman2017equivalence} and Soft Actor Critic \citep{haarnoja2018soft}. In the field of imitation learning, they play a crucial role in Maximum Entropy IRL \citep{ziebart2008maximum,ziebart2010modeling}. Furthermore, within the realm of offline RL, these soft Q-functions are utilized to make the learned policy and behavior policy sufficiently similar \citep{wu2019behavior,fakoor2021continuous}. However, to the best of the authors' knowledge, none of these proposals in the context of offline RL have provided sample complexity results under partial coverage. %

\paragraph{Lagrangian view of offline RL.}

In the realm of offline policy evaluation (OPE), \citet{nachum2020reinforcement,yang2020off,huang2022beyond} have formulated the problem as a constrained linear optimization problem. Notably, within the context of policy optimization, \citet{zhan2022offline} have proposed estimators for regularized density ratios with $L^2$-convergence guarantees, which is a crucial step in obtaining a near-optimal policy. Our work is similarly motivated, but with a key distinction: our target functions are the soft Q-function and Q-function, rather than the regularized density ratio, which presents additional analytical challenges due to the nonlinear constraint.

\section{Preliminaries}

We consider an infinite-horizon discounted MDP $\Mcal =\langle \Scal,\Acal, P,r,\gamma,\mu_0\rangle$ where $\Scal$ is the state space, $\Acal$ is the finite action space, $\gamma \in [0,1)$ is the discount factor, reward $r$ is a random variable following $P_r(\cdot \mid s,a)$ on $[R_{\min},R_{\max}]$ ($R_{\min}\geq 0$), $\mu_0$ is the initial distribution. A policy $\pi:\Scal \to \Delta(\Acal)$ is a map from the state to the distribution over actions. We denote the discounted state-action occupancy distribution under a policy $\pi$ starting from an initial distribution $\mu_0$ by $d_{\pi,\mu_0}(s,a)$. With slight abuse of notation, we denote $d_{\pi,\mu_0}(s)=\sum_a d_{\pi,\mu_0}(s,a)$. We define the value under $\pi$ as $J(\pi)\coloneqq \EE_{\pi}[\sum_{t=0}^{\infty} \gamma^t \tilde r(s_t,a_t)]$ where the expectation is taken under $\pi$. We denote the optimal policy $\argmax_{\pi} J(\pi)$ by $\pi^{\star}$, and its Q-function $\EE_{\pi^{\star}}[\sum_t \gamma^t \tilde r(s_t,a_t)\mid s_0=s,a_0=a] $ by $q^{\star}(s,a)$. 

In offline RL, using offline data $\Dcal=\{(s_i,a_i,r_i,s'_i):i=1,\dots,n\}$, we search for the policy $\pi^{\star}$ that maximizes the policy value. We suppose each $(s_i,a_i,r_i,s'_i)$ is sampled {i.i.d.} from $s_i\sim P_b, a_i\sim \pi_b(\cdot\mid s), r_i\sim P_r(\cdot \mid s_i,a_i),s'_i\sim P(\cdot \mid s_i,a_i)$. We denote the sample average of $f$ by $\EE_n[f(s,a,r,s')]=\frac1n\sum_{i=1}^nf(s_i,a_i,r_i,s'_i)$, and the expectation of $f$ with respect to the offline data distribution by $\EE[f(s,a,r,s')]$ (without any scripts). The policy $\pi_b$ used to collect data is typically referred to as a behavior policy. 
With slight abuse of notation, we denote $P_b(s,a)=P_b(s)\pi_b(a\mid s)$. %

\textbf{Notation.} 
We denote the support of $P_b(\cdot)$ by $(\Scal \times \Acal)_b$, and the $L^{\infty}$-norm on $(\Scal \times \Acal)_b$ by $\|\cdot\|_{\infty,b}$. 
The $L^{\infty}$-norm on $(\Scal \times \Acal)$ is denoted by $\|\cdot\|_{\infty}$. We define $w_{\pi}(s,a)=d_{\pi,\mu_0}(s,a)/P_b(s,a)$ (if it exists). We define $\mathrm{sofmax}(h)=\frac{\exp(h(s,a))}{\sum_a \exp(h(s,a))}$ and $\|h\|_2=\EE_{(s,a)\sim P_b}[h^2(s,a)]^{1/2}$ for $h:\Scal \times \Acal \to \RR$. We denote universal constants by $c_1,c_2,\dots$. We use the convention $a/0=\infty$ when $a\neq 0$ and $0/0=0$.

\section{Algorithms}\label{sec:motivation}

In this section, we present two algorithms. {\newedit The first algorithm aims to estimate the soft optimal policy by first estimating a soft Q-function. The second algorithm estimates the optimal policy after estimating the Q-function.}

\subsection{Minimax Soft-Q-learning with Penalization}
Our ultimate aim is to mimic the optimal policy $\pi^{\star}$. As a first step, we begin by finding a policy that maximizes the following regularized objective: 
 $  \argmax_{\pi} J_{\alpha}(\pi)$ where for $\alpha>0$ we define
\begin{align*}
 J_{\alpha}(\pi)=(1-\gamma)^{-1}\EE_{(s,a)\sim d_{\pi,\mu_0},r\sim P_r(\cdot \mid s,a)}[r-\underbrace{\alpha \log \{\pi(a\mid s)/\pi_b(a\mid s)\}]}_{\text{KL penalty (between $\pi$ and $\pi_b$)}} 
\end{align*} %
This objective function is used in a variety of contexts in RL as mentioned in Section~\ref{subsec:related}. The optimal policy that maximizes $J_{\alpha}(\pi)$ with respect to $\pi$  is  
\begin{align}\label{eq:softmax_optimal}
    \pi^{\star}_{\alpha}=\mathrm{softmax}(q^{\star}_{\alpha}/\alpha + \log \pi_b),
\end{align}
where $q^{\star}_{\alpha}:\Scal \times \Acal \to \RR$ is the soft Q-function uniquely characterized by the soft Bellman equation:
\begin{align*}
\forall (s,a);\EE_{s'\sim P(\cdot \mid s,a)}[\gamma \Omega_{\alpha,\pi_b}(q^{\star}_{\alpha})(s')+r-q^{\star}_{\alpha}(s,a) \mid s,a ]=0,
\end{align*}
where $\Omega_{\alpha,\pi_b}:[\Scal\times \Acal \to \RR]\to[\Scal \to \RR]$ has $
    \Omega_{\alpha,\pi_b}(q)(s)= \alpha \log \sum_{a}\{\exp(q(s,a')/\alpha)\pi_b(a' \mid s) \}. $
As opposed to the standard objective function with $\alpha=0$, the KL penalty term serves as a regularization term that renders $\pi^{\star}_{\alpha}$ sufficiently proximate to $\pi_b$. As $\alpha$ approaches $\infty$, the optimal policy $\pi^{\star}_{\alpha}$ approaches $\pi_b$. %
On the other hand, when $\alpha=0$, $\pi^{\star}_{\alpha}$ is $\pi^{\star}$. Thus, in order to compete with $\pi^{\star}$, it is necessary to keep $\alpha$ sufficiently small. We elaborate on this selection procedure in \pref{sec:theory}. 

The natural method for offline RL using this formulation involves learning $q^{\star}_{\alpha}$ from the offline data and plugging it into \pref{eq:softmax_optimal}. The question that remains is how to accurately learn $q^{\star}_{\alpha}$ from the offline data. We consider the following optimization problem: 
\begin{align}\textstyle\label{eq:goal-pre}
    \argmin_{q \in \Qcal'} 0.5 \EE_{(s,a)\sim P_b}[q^2(s,a)]
\end{align} 
 where $\Qcal'$ consists of all functions $q: \Scal \times \Acal \to \RR$ satisfying 
 \begin{align}\label{eq:constraint1}
\EE_{s'\sim P(\cdot\mid s,a)}[\gamma \Omega_{\alpha,\pi_b}(q)(s')+r-q(s,a) \mid s,a ]=0~~~\forall (s,a)\in (\Scal \times \Acal)_b.
\end{align}
{ Here, because of the constraint \pref{eq:constraint1}, the solution is $q^{\star}_{\alpha}$. Furthermore, we use $q^2(s,a)$ in \pref{eq:goal-pre} because this choice relaxes the equality in \pref{eq:constraint1} to an inequality $\leq 0$ as we will demonstrate in  \pref{sec:intuition}. Consequently, the entire optimization problem outlined in \pref{eq:goal-pre} and \pref{eq:constraint1} transforms into a convex optimization problem. }

Then, using the method of Lagrange multipliers, \pref{eq:goal-pre} is transformed into 
\begin{align}\label{eq:goal}\textstyle
    \min_{q}\max_{l} L_{\alpha}(q,l),\quad L_{\alpha}(q,l):=
\EE\bracks{q^2(s,a)/2+ \{\gamma \Omega_{\alpha,\pi_b}(q)(s')+r-q(s,a)\}l(s,a)}. 
\end{align}

Being motivated by the above formulation, our MSQP algorithm, specified in \pref{alg:main}, approximates this formulation by replacing expectations with sample averages and restricting optimization to function classes with bounded complexity.

\begin{algorithm}[!t]
\caption{MSQP (Minimax Soft-Q-learning with Penalization) }\label{alg:main}
\begin{algorithmic}[1]
  \STATE {\bf Require}: Parameter $\alpha \in \RR^{+}$, Models $\Qcal,\Lcal \subset [\Scal \times \Acal \to \RR^{+}]$. 
    \STATE   Estimate $q^{\star}_{\alpha}$ as follows: 
    \begin{align}\label{eq:soft_q_objective}
        \hat q_{\alpha}\in \argmin_{q\in \Qcal}\max_{l\in \Lcal}\EE_n[q^2(s,a)/2+ %
        \{\gamma \Omega_{\alpha,\pi_b}(q)(s')+r-q(s,a)\}l(s,a)]. %
    \end{align}%
   \STATE Estimate the soft optimal policy: $
   \textstyle 
      \hat \pi_{\alpha}=\mathrm{softmax}(\hat q_{\alpha}/\alpha +\log \pi_b ). $
\end{algorithmic}
\end{algorithm}

\begin{remark}[Computation]
Although minimax optimization is generally difficult to solve, it is computationally feasible when we choose RKHS or linear function classes for $\Lcal$. In this case, we can solve the inner maximization problem analytically in closed form, as the objective function is linear in $l$. 
As a result, the minimax optimization problem reduces to  empirical risk minimization. 
\end{remark}

\subsection{Minimax $Q^{\star}$-learning with Penalization }

Next, we examine a policy learning algorithm utilizing $Q^{\star}$-functions. To learn $Q^{\star}$, our objective function is derived from the constrained optimization problem:
\begin{align}\textstyle\label{eq:goal2-pre}
    \argmin_{q\in \Qcal^{\star '}}0.5\EE_{(s,a)\sim P_b}[q^2(s,a)]
\end{align}
where $\Qcal^{\star '}$ consists of all functions $q: \Scal \times \Acal \to \RR$ satisfying
\begin{align*}\textstyle
\forall (s,a)\in (\Scal \times \Acal)_b;\mathbb{E}_{s'\sim P(\cdot \mid s,a)}[\gamma \max_{a'\in \Acal} q(s',a')+r-q(s,a)\mid s,a]=0. 
\end{align*}
Next, again using the method of Lagrange multipliers, \pref{eq:goal2-pre} is transformed into 
\begin{align} \label{eq:goal2}\textstyle
    \min_{q}\max_{l} L_{0}(q,l),\,L_{0}(q,l):=\EE [q^2(s,a)/2 + \{ \gamma \max_{a'}q(s',a')+r-q(s,a)\}l(s,a)]. 
\end{align}
Note $L_0$ is the limit of $L_\alpha$ as $\alpha\to0$.

Our MQP algorithm, specified in \pref{alg:main2}, similarly approximates this formulation by replacing expectations with sample averages and restricting optimization to function classes with bounded complexity. Our final policy is greedy with respect to the learned Q-function but restricting to the support of the offline data in order to avoid exploiting regions not covered by the offline data.

\begin{algorithm}[!t]
\caption{MQP (Minimax $Q^{\star}$-learning with Penalization)}\label{alg:main2}
\begin{algorithmic}[1]
  \STATE {\bf Require}:  Models $\Qcal,\Lcal \subset [\Scal \times \Acal \to \RR^{+}]$. 
    \STATE   Estimate $q^{\star}$ as follows:
    \begin{align}\label{eq:q_objective}
        \hat q_{0}\in \argmin_{q\in \Qcal}\max_{l\in \Lcal}\EE_n[q^2(s,a)/2+  %
        \{\gamma \max_{a'} q(s',a')+r-q(s,a)\}l(s,a)]. %
    \end{align}
   \STATE Estimate the optimal policy: $
       \hat \pi_{0}(a\mid s)=  \argmax_{a: \pi_b(a\mid s)>0}\hat q_{0}(s,a).$
\end{algorithmic}
\end{algorithm}

\begin{remark}[Prominent differences]\label{rem:difference_q_star}
There exist several other minimax estimators for $Q^{\star}$ including BRM \citep{antos2008learning} and MABO \citep{xie2020q}. {\newedit Although these ensure convergence guarantees in terms of Bellman residual errors, they do not ensure the guarantee in terms of $L^2$-errors, which is our focus.}

Our minimax objective function differs significantly from that of the aforementioned approaches, and its unique design plays a pivotal role in enabling $L^2$-rates.
\end{remark}

\section{$L^2$-convergence Rates for Soft $Q$-functions and $Q^{\star}$-functions}

To analyze our Q-estimators we first establish conditions that ensure $
 q^{\star}_{\alpha}=\argmin_{q \in \Qcal }\max_{l \in \Lcal} L_{\alpha}(q,l)$ on the support $(\Scal\times \Acal)_b$. Building on this, we prove $L^2$-convergence rates for $\hat q_{\alpha}$ and $\hat q_0$. {\newedit These $L^2$-convergence guarantees are subsequently translated into performance guarantees of the policies we output in \pref{sec:theory}.}

\subsection{Identification of Soft Q-functions}\label{sec:identification}

Consider an $L^2$-space $\Hcal$ where the inner product is define as $\langle h_1,h_2\rangle = \EE_{(s,a)\sim P_b}[h_1(s,a)h_2(s,a)]$.
Then we define two operators and a key function:\footnote{{\finaledit We use the notation $\cdot^{\top}$ because ${P^{\star}_{\alpha}}^{\top}$ is interpreted as the adjoint operator in the non-weighted $L^2$-space.}}
\begin{align*}
  & P^{\star}_{\alpha}   \textstyle: \Hcal \ni f \mapsto \EE_{s'\sim P(s,a),a'\sim \pi^{\star}_{\alpha}}[f(s',a')\mid (s,a)=\cdot] \in \Hcal, \\ 
   \textstyle   &  \{P^{\star}_{\alpha}\}^{\top}  \textstyle: \Hcal \ni f \mapsto \int P(\cdot \mid s,a)\pi^{\star}_{\alpha}(\cdot \mid \cdot)f(s,a)\mathrm{d}(s,a) \in \Hcal,\\
    &l^{\star}_{\alpha}(s,a) \coloneqq  \begin{cases}
         \frac{(I-\gamma \{P^{\star}_{\alpha}\}^{\top})^{-1 }(P_b(s,a)q^{\star}_{\alpha}(s,a))}{P_b(s,a)} \quad&(s,a)\in (\Scal \times \Acal)_b,  \\
         0 &(s,a)\neq (\Scal \times \Acal)_b.  
    \end{cases}
\end{align*}
These satisfy a key adjoint property, which we leverage to show $(q^{\star}_{\alpha},l^{\star}_{\alpha})$ is a saddle point of $L_{\alpha}(q,l)$.
\begin{lemma}\label{lem:adjoint}
 $\forall q\in \Hcal$, we have 
$\langle  l^{\star}_{\alpha} , (I-\gamma P^{\star}_{\alpha}) q \rangle_{\Hcal} = \langle  q^{\star}_{\alpha}, q \rangle _{\Hcal}. $
\end{lemma}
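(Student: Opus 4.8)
The plan is to reduce the claim to the single algebraic fact that $\{P^{\star}_{\alpha}\}^{\top}$ is the adjoint of $P^{\star}_{\alpha}$ with respect to the \emph{unweighted} $L^2$ inner product $\langle f,g\rangle_0 \coloneqq \int f(s,a)g(s,a)\,\mathrm{d}(s,a)$, and then to use the definition of $l^{\star}_{\alpha}$ to recognize the resulting function as $q^{\star}_{\alpha}$ reweighted by $P_b$. First I would record the adjoint identity itself: writing out $\langle P^{\star}_{\alpha} f, g\rangle_0$ and exchanging the order of integration (Fubini) gives $\langle P^{\star}_{\alpha} f, g\rangle_0 = \langle f, \{P^{\star}_{\alpha}\}^{\top} g\rangle_0$ for all $f,g$. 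This is exactly the content of the footnote attached to the definition of $\{P^{\star}_{\alpha}\}^{\top}$, and it is the only place the explicit transition-kernel formulas are used.

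Next I would convert the weighted inner product into an unweighted one by absorbing the density. Since pointwise multiplication by $P_b$ is self-adjoint for $\langle\cdot,\cdot\rangle_0$, we have $\langle h_1,h_2\rangle_{\Hcal} = \langle P_b h_1, h_2\rangle_0$, and the left-hand side becomes
\begin{align*}
\langle l^{\star}_{\alpha}, (I-\gamma P^{\star}_{\alpha})q\rangle_{\Hcal}
&= \langle P_b\, l^{\star}_{\alpha},\,(I-\gamma P^{\star}_{\alpha})q\rangle_0 \\
&= \langle (I-\gamma \{P^{\star}_{\alpha}\}^{\top})(P_b\, l^{\star}_{\alpha}),\,q\rangle_0,
\end{align*}
where the last step applies the adjoint identity term by term. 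The definition of $l^{\star}_{\alpha}$ says precisely that $P_b\, l^{\star}_{\alpha} = (I-\gamma\{P^{\star}_{\alpha}\}^{\top})^{-1}(P_b\, q^{\star}_{\alpha})$ on $(\Scal\times\Acal)_b$, so applying $(I-\gamma\{P^{\star}_{\alpha}\}^{\top})$ returns $P_b\, q^{\star}_{\alpha}$ there; pairing with $q$ then yields $\langle P_b\, q^{\star}_{\alpha}, q\rangle_0 = \langle q^{\star}_{\alpha}, q\rangle_{\Hcal}$, which is the claim.

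The step requiring the most care — and the main obstacle — is the bookkeeping on the support $(\Scal\times\Acal)_b$. The operator $\{P^{\star}_{\alpha}\}^{\top}$ is a push-forward operator and generically spreads mass off the support of $P_b$, so $(I-\gamma\{P^{\star}_{\alpha}\}^{\top})^{-1}(P_b q^{\star}_{\alpha})$ need not vanish off $(\Scal\times\Acal)_b$, whereas $l^{\star}_{\alpha}$ is defined to be $0$ there. I would resolve this by working throughout in $\Hcal = L^2(P_b)$ with the convention that every element is represented by the function vanishing off $(\Scal\times\Acal)_b$; in particular both $q$ and $l^{\star}_{\alpha}$ vanish off the support, and $P^{\star}_{\alpha}$ acts on these zero-extended representatives. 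With this convention the off-support discrepancy between $(I-\gamma\{P^{\star}_{\alpha}\}^{\top})(P_b l^{\star}_{\alpha})$ and $P_b q^{\star}_{\alpha}$ is paired only against $q$, which is identically zero off the support, so it contributes nothing to $\langle\cdot,q\rangle_0$. I would therefore state up front that the resolvent $(I-\gamma\{P^{\star}_{\alpha}\}^{\top})^{-1}$ appearing in the definition of $l^{\star}_{\alpha}$ is that of the operator restricted to the support, and check that this restriction is consistent with the adjoint move above; once this is pinned down, the three displayed equalities close the argument.
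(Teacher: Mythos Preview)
Your proposal is correct and follows essentially the same route as the paper: pass from the $P_b$-weighted inner product to an unweighted one, move $(I-\gamma P^{\star}_{\alpha})$ across via the adjoint $\{P^{\star}_{\alpha}\}^{\top}$, and then read off $P_b q^{\star}_{\alpha}$ from the definition of $l^{\star}_{\alpha}$. The only difference is in the support bookkeeping: where you zero-extend $q$ and interpret the resolvent as support-restricted, the paper instead invokes the coverage condition of Assumption~\ref{assum:scale2} to insert the indicator $\mathrm{I}(P_b(s',a')>0)$ for free, thereby showing the adjoint identity $\langle g_1, P^{\star}_{\alpha} g_2\rangle_{\Hcal'}=\langle \{P^{\star}_{\alpha}\}^{\top} g_1, g_2\rangle_{\Hcal'}$ directly on the support-restricted unweighted space $\Hcal'$---you may want to make that dependence on Assumption~\ref{assum:scale2} explicit in your write-up as well.
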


Our first assumption ensures that $l_\alpha^\star$ exists. 
\begin{assum}\label{assum:scale2}
Suppose $\|d_{\pi^{\star}_{\alpha},P_b}/P_b\|_{\infty}<\infty$. {Note the infinity norm $\|\cdot\|_{\infty}$ is over $\Scal \times \Acal$.}  
\end{assum}
\begin{proposition}\label{prop:scale2}
Under Assumption~\ref{assum:scale2}, we have
$\|l^{\star}_{\alpha}\|_{\infty} < \infty$.

\end{proposition}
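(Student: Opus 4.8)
The plan is to expand the resolvent $(I-\gamma\{P^{\star}_{\alpha}\}^{\top})^{-1}$ as a Neumann series and to read off $\{P^{\star}_{\alpha}\}^{\top}$ as the one-step forward push-forward of (signed) measures under the dynamics $P$ and the policy $\pi^{\star}_{\alpha}$. Concretely, from $\{P^{\star}_{\alpha}\}^{\top}(g)(s',a')=\int P(s'\mid s,a)\pi^{\star}_{\alpha}(a'\mid s')g(s,a)\,\mathrm{d}(s,a)$, the operator is \emph{positive} (it maps nonnegative functions to nonnegative functions) and \emph{monotone} ($g_1\le g_2$ pointwise implies $\{P^{\star}_{\alpha}\}^{\top}g_1\le \{P^{\star}_{\alpha}\}^{\top}g_2$), and it preserves the total mass of a nonnegative measure; hence its $L^1$-operator norm is at most $1$ and, since $\gamma<1$, the series $(I-\gamma\{P^{\star}_{\alpha}\}^{\top})^{-1}=\sum_{t\ge 0}\gamma^t(\{P^{\star}_{\alpha}\}^{\top})^t$ converges. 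Applied to the measure with density $P_b\,q^{\star}_{\alpha}$, the $t$-th term is the mass at time $t$ of the process started from a state-action pair drawn from $P_b$ and thereafter following $\pi^{\star}_{\alpha}$, reweighted by $q^{\star}_{\alpha}$ at the initial step.

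First I would record that $q^{\star}_{\alpha}$ is bounded. From the soft Bellman equation and the elementary bounds $\min_{a'}q(s',a')\le \Omega_{\alpha,\pi_b}(q)(s')\le \max_{a'}q(s',a')$ (the lower bound is Jensen's inequality, the upper bound is the log-sum-exp bound), the soft Bellman operator is a $\gamma$-contraction in $\|\cdot\|_{\infty}$ and $R_{\min}\ge 0$, so its fixed point obeys $0\le q^{\star}_{\alpha}(s,a)\le R_{\max}/(1-\gamma)=:Q_{\max}$. Combining positivity and monotonicity from the first step with $|P_b\,q^{\star}_{\alpha}|\le Q_{\max}\,P_b$ pointwise, I then bound, for $(s,a)\in(\Scal\times\Acal)_b$,
\begin{align*}
|l^{\star}_{\alpha}(s,a)|
=\frac{\big|\sum_{t\ge 0}\gamma^t(\{P^{\star}_{\alpha}\}^{\top})^t(P_b q^{\star}_{\alpha})(s,a)\big|}{P_b(s,a)}
\le Q_{\max}\,\frac{\sum_{t\ge 0}\gamma^t(\{P^{\star}_{\alpha}\}^{\top})^t(P_b)(s,a)}{P_b(s,a)},
\end{align*}
where the inequality uses $|(\{P^{\star}_{\alpha}\}^{\top})^t g|\le (\{P^{\star}_{\alpha}\}^{\top})^t|g|$. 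Identifying the numerator with the discounted occupancy via $\sum_{t\ge 0}\gamma^t(\{P^{\star}_{\alpha}\}^{\top})^t(P_b)=(1-\gamma)^{-1}d_{\pi^{\star}_{\alpha},P_b}$ yields $|l^{\star}_{\alpha}(s,a)|\le \tfrac{Q_{\max}}{1-\gamma}\,d_{\pi^{\star}_{\alpha},P_b}(s,a)/P_b(s,a)\le \tfrac{Q_{\max}}{1-\gamma}\|d_{\pi^{\star}_{\alpha},P_b}/P_b\|_{\infty}$, finite by Assumption~\ref{assum:scale2}; off the support $l^{\star}_{\alpha}\equiv 0$, so $\|l^{\star}_{\alpha}\|_{\infty}<\infty$.

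The routine parts are the contraction/boundedness of $q^{\star}_{\alpha}$ and the displayed algebra. The delicate part is making the operator manipulations rigorous: that the resolvent equals its Neumann series as an operator on the relevant space (I would argue in $L^1$ with respect to the base measure, where $\gamma\{P^{\star}_{\alpha}\}^{\top}$ is a strict contraction), that the termwise pointwise bounds may be summed (all summands are nonnegative after taking absolute values, so monotone convergence applies), and—most importantly—that $\sum_{t\ge 0}\gamma^t(\{P^{\star}_{\alpha}\}^{\top})^t(P_b)$ really equals $(1-\gamma)^{-1}d_{\pi^{\star}_{\alpha},P_b}$ under the paper's convention for the occupancy started from $P_b$. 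I expect this occupancy identification, and the bookkeeping of the initial action (drawn from $\pi_b$ inside $P_b$ versus $\pi^{\star}_{\alpha}$ thereafter), to be the only genuinely fiddly step; everything else is positivity plus the geometric series.
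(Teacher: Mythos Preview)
Your proposal is correct and follows essentially the same route as the paper's one-line argument: expand $(I-\gamma\{P^{\star}_{\alpha}\}^{\top})^{-1}$ as a Neumann series, bound $q^{\star}_{\alpha}$ in sup-norm, and identify $\sum_{t\ge 0}\gamma^t(\{P^{\star}_{\alpha}\}^{\top})^t P_b$ with $(1-\gamma)^{-1}d_{\pi^{\star}_{\alpha},P_b}$ to conclude $\|l^{\star}_{\alpha}\|_{\infty}\le (1-\gamma)^{-1}Q_{\max}\|d_{\pi^{\star}_{\alpha},P_b}/P_b\|_{\infty}$. Your version simply supplies the positivity/monotonicity and convergence details the paper omits; the occupancy identification you flag as ``fiddly'' is exactly what the paper invokes, and your bookkeeping of the initial action (drawn from $\pi_b$ via $P_b$, thereafter $\pi^{\star}_{\alpha}$) matches the paper's convention for $d_{\pi^{\star}_{\alpha},P_b}$.
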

\pref{prop:scale2} is immediate noting that $(I-\gamma \{P^{\star}_{\alpha}\}^{\top})^{-1}(P_b(\cdot)q^{\star}_{\alpha}(\cdot))=\sum_{t=0}^{\infty}\gamma^t (\{P^{\star}_{\alpha}\}^{\top})^t  (P_bq^{\star}_{\alpha})$ and
recalling the discounted occupancy measure under $\pi^{\star}_{\alpha}$ with initial distribution $\mu_0$ is written as $d_{\pi^{\star}_{\alpha},\mu_0} = (1-\gamma)(I-\gamma \{P^{\star}_{\alpha}\}^{\top})^{-1}(\mu_0)$. Hence, $\|l^{\star}_{\alpha}\|_{\infty}\leq (1-\gamma)^{-1}R_{\max}\|d_{\pi^{\star}_{\alpha,P_b}}/P_b\|_{\infty}$.

{\finaledit Note that $\|d_{\pi^{\star}_{\alpha,P_b}}/P_b\|_{\infty}$ crucially differs with the standard density-ratio-based concentrability coefficient  $\|d_{\pi^{*}_{\alpha},\mu_0}/P_b\|_{\infty}$ in offline RL. 
{\newedit Unlike $\|d_{\pi^{*}_{\alpha},P_b}/P_b\|_{\infty}$ , the value of $\|d_{\pi^{*}_{\alpha},\mu_0}/P_b\|_{\infty}$ can be infinite when the initial distribution $\mu_0$ is not covered by offline data $P_b$
 as the practical motivating example is explained in the footnote in Section~\ref{sec:intro} and Example~\ref{exa:external}. 
}}

Our next assumption ensures $q^{\star}_{\alpha}\geq 0$, which also guarantees that $l^{\star}_{\alpha}\geq 0$.
\begin{assum}\label{assum:scale}
Suppose $\alpha\log\|\pi^{*}_{\alpha}/\pi_b\|_{\infty}\leq R_{\min}$. %
\end{assum}
Assumption~\ref{assum:scale} can be satisfied by rescaling reward (\ie, rescaling $R_{\min}$) as long as $\|\pi^{*}_{\alpha}/\pi_b\|_{\infty}$ is finite. Hence, it is very mild.   Putting \pref{lem:adjoint} together with our assumptions we have the following.

\begin{lemma}\label{lem:saddle}
Suppose Assumptions~\ref{assum:scale2} and  \ref{assum:scale} hold. Then, $(q^{\star}_{\alpha}, l^{\star}_{\alpha})$ is a saddle point of   $L_{\alpha}(q,l)$ over $q\in \Hcal,l\in \Hcal$, \ie, $
  L_{\alpha}(q,l^{\star}_{\alpha}) \geq L_{\alpha}(q^{\star}_{\alpha},l^{\star}_{\alpha}) \geq L_{\alpha}(q^{\star}_{\alpha},l)$ $\forall q\in \Hcal, \forall l\in \Hcal$.
\end{lemma}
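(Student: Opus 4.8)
The plan is to verify the two saddle-point inequalities separately, exploiting the structural fact that $L_{\alpha}(q^{\star}_{\alpha},\cdot)$ is affine in $l$ while $L_{\alpha}(\cdot,l^{\star}_{\alpha})$ is convex in $q$.

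For the right inequality $L_{\alpha}(q^{\star}_{\alpha},l^{\star}_{\alpha})\ge L_{\alpha}(q^{\star}_{\alpha},l)$, I would first note that $L_{\alpha}(q^{\star}_{\alpha},l)$ is affine in $l$, with $l$-dependent part $\EE[\{\gamma\Omega_{\alpha,\pi_b}(q^{\star}_{\alpha})(s')+r-q^{\star}_{\alpha}(s,a)\}\,l(s,a)]$. Conditioning on $(s,a)$ via the tower property, the coefficient multiplying $l(s,a)$ is exactly the conditional soft Bellman residual $\EE_{r,s'\mid s,a}[\gamma\Omega_{\alpha,\pi_b}(q^{\star}_{\alpha})(s')+r-q^{\star}_{\alpha}(s,a)]$, which vanishes on $(\Scal\times\Acal)_b$ by the defining constraint \pref{eq:constraint1}. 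Hence $L_{\alpha}(q^{\star}_{\alpha},l)=\EE[\{q^{\star}_{\alpha}\}^2/2]$ is constant in $l$, so the inequality holds (indeed with equality, as does its reverse).

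For the left inequality, I would first argue that $L_{\alpha}(\cdot,l^{\star}_{\alpha})$ is convex on $\Hcal$: the term $\EE[q^2/2]$ is convex, the term $-\EE[q(s,a)\,l^{\star}_{\alpha}(s,a)]$ is linear, and $\Omega_{\alpha,\pi_b}(q)(s')=\alpha\log\sum_{a'}\exp(q(s',a')/\alpha)\pi_b(a'\mid s')$ is convex in $q$ (log-sum-exp); since $l^{\star}_{\alpha}\ge 0$ (guaranteed by Assumptions~\ref{assum:scale2} and \ref{assum:scale}, via Proposition~\ref{prop:scale2} together with the nonnegative Neumann-series representation of $l^{\star}_{\alpha}$) and $\gamma\ge 0$, the term $\gamma\EE[\Omega_{\alpha,\pi_b}(q)(s')\,l^{\star}_{\alpha}(s,a)]$ is convex as well. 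By convexity, it then suffices to show that the Gateaux derivative of $L_{\alpha}(\cdot,l^{\star}_{\alpha})$ at $q^{\star}_{\alpha}$ vanishes in every direction $h\in\Hcal$. Computing this derivative, the key step is to differentiate $\Omega_{\alpha,\pi_b}$: its directional derivative at $q^{\star}_{\alpha}$ equals $\sum_{a'}\pi^{\star}_{\alpha}(a'\mid s')\,h(s',a')=\EE_{a'\sim\pi^{\star}_{\alpha}}[h(s',a')]$, because the softmax weights appearing in $D\Omega_{\alpha,\pi_b}(q^{\star}_{\alpha})$ coincide with $\pi^{\star}_{\alpha}$ by \pref{eq:softmax_optimal}. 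Averaging over $s'\sim P(\cdot\mid s,a)$ turns this into $(P^{\star}_{\alpha}h)(s,a)$, so the full derivative collapses to $\langle q^{\star}_{\alpha},h\rangle_{\Hcal}-\langle l^{\star}_{\alpha},(I-\gamma P^{\star}_{\alpha})h\rangle_{\Hcal}$, which is zero for all $h$ by the adjoint identity of Lemma~\ref{lem:adjoint}. Since a stationary point of a convex function is a global minimizer, this yields the left inequality and completes the proof.

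The main obstacle I anticipate is handling the nonlinear operator $\Omega_{\alpha,\pi_b}$ on two fronts simultaneously: I need its convexity (which is what forces $l^{\star}_{\alpha}\ge 0$ and hence both assumptions) in order to upgrade a first-order condition to a global minimum, and I separately need its exact derivative at $q^{\star}_{\alpha}$ to reproduce the operator $P^{\star}_{\alpha}$, which is precisely what makes Lemma~\ref{lem:adjoint} applicable. Some care is also required to justify interchanging differentiation with the expectation, but this is routine given that $\Acal$ is finite and $q^{\star}_{\alpha},l^{\star}_{\alpha}$ are bounded by Proposition~\ref{prop:scale2}.
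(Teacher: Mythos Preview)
Your proposal is correct and follows essentially the same logic as the paper's proof: both halves rely on the soft Bellman equation for the $l$-inequality, and on convexity of the log-sum-exp term (enabled by $l^{\star}_{\alpha}\ge 0$, which in turn hinges on $q^{\star}_{\alpha}\ge 0$ from Assumption~\ref{assum:scale}) together with the adjoint identity of Lemma~\ref{lem:adjoint} for the $q$-inequality.

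The only noteworthy difference is presentational but has downstream consequences. You argue ``convex $+$ vanishing Gateaux derivative $\Rightarrow$ global minimum,'' which certifies $L_{\alpha}(q,l^{\star}_{\alpha})\ge L_{\alpha}(q^{\star}_{\alpha},l^{\star}_{\alpha})$. The paper instead applies the first-order convexity inequality $f(x)-f(y)\ge \nabla f(y)\cdot(x-y)$ directly to the log-sum-exp part and, after invoking Lemma~\ref{lem:adjoint}, arrives at the \emph{quantitative} bound
\[
L_{\alpha}(q,l^{\star}_{\alpha})-L_{\alpha}(q^{\star}_{\alpha},l^{\star}_{\alpha})\;\ge\;\tfrac12\,\EE_{(s,a)\sim P_b}\bigl[(q-q^{\star}_{\alpha})^2(s,a)\bigr],
\]
i.e., inequality~\pref{eq:important}. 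This stronger statement is not needed for Lemma~\ref{lem:saddle} itself, but it is exactly what the paper re-uses in the proofs of Lemma~\ref{lem:saddle2} and Theorem~\ref{thm:convergence_soft}. Your route would recover it too with one extra line (retaining the strongly convex $\tfrac12\|q-q^{\star}_{\alpha}\|_2^2$ term rather than discarding it), so there is no real gap---just be aware that the intermediate inequality is load-bearing later on.
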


Recall that a point $(\tilde q,\tilde l)$ is a saddle point if and only if the strong duality holds, and $\tilde q \in \argmin_{q \in \Hcal}\sup_{l \in \Hcal}L_{\alpha}(q,l),\tilde l \in \argmax_{l \in \Hcal}\inf_{q\in \Hcal}L_{\alpha}(q,l)$ using the general characterization \citep{bertsekas2009convex}. Hence, \pref{lem:saddle} ensures $q^{\star}_{\alpha} \in  \argmin_{q \in \Hcal}\max_{l \in \Hcal}L_{\alpha}(q,l)$. 

Next, we consider the constrained optimization problem when we use function classes $\Qcal \subset \Hcal, \Lcal \subset \Hcal$. As long as the saddle point is included in $(\Qcal,\Lcal)$, we can prove that $q^{\star}_{\alpha}$ is a unique minimaxer.  %

\begin{lemma}\label{lem:saddle2}
Suppose Assumptions~\ref{assum:scale2} and \ref{assum:scale} hold, $q^{\star}_{\alpha} \in \Qcal$, and $l^{\star}_{\alpha}\in \Lcal$. Then, we have that
$q^{\star}_{\alpha} =  \argmin_{q \in \Qcal}\sup_{l \in \Lcal}L_{\alpha}(q,l)$ on the support $(\Scal \times \Acal)_b$. 
\end{lemma}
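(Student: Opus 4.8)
The plan is to split the claim into an \emph{achievability} part, that $q^{\star}_{\alpha}$ attains the outer minimum with value equal to the saddle value $L_{\alpha}(q^{\star}_{\alpha},l^{\star}_{\alpha})$, and a \emph{uniqueness-on-the-support} part, that any other minimizer coincides with $q^{\star}_{\alpha}$ on $(\Scal\times\Acal)_b$. Both parts lean on the unconstrained saddle-point result \pref{lem:saddle} together with the inclusions $q^{\star}_{\alpha}\in\Qcal$ and $l^{\star}_{\alpha}\in\Lcal$, which let me pass freely between suprema over $\Lcal$ and the distinguished point $l^{\star}_{\alpha}$.

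For achievability, I would first note that since $l^{\star}_{\alpha}$ maximizes $L_{\alpha}(q^{\star}_{\alpha},\cdot)$ over all of $\Hcal\supset\Lcal$ and $l^{\star}_{\alpha}\in\Lcal$, we get $\sup_{l\in\Lcal}L_{\alpha}(q^{\star}_{\alpha},l)=L_{\alpha}(q^{\star}_{\alpha},l^{\star}_{\alpha})$. For an arbitrary $q\in\Qcal$, restricting the supremum to the single point $l^{\star}_{\alpha}$ and then using that $q^{\star}_{\alpha}$ minimizes $L_{\alpha}(\cdot,l^{\star}_{\alpha})$ over $\Hcal$ gives $\sup_{l\in\Lcal}L_{\alpha}(q,l)\ge L_{\alpha}(q,l^{\star}_{\alpha})\ge L_{\alpha}(q^{\star}_{\alpha},l^{\star}_{\alpha})$. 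Combining the two displays shows $q^{\star}_{\alpha}\in\argmin_{q\in\Qcal}\sup_{l\in\Lcal}L_{\alpha}(q,l)$ with optimal value $m:=L_{\alpha}(q^{\star}_{\alpha},l^{\star}_{\alpha})$.

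For uniqueness I would take any minimizer $\tilde q\in\Qcal$ and squeeze it onto $L_{\alpha}(\cdot,l^{\star}_{\alpha})$. Since $l^{\star}_{\alpha}\in\Lcal$, optimality of $\tilde q$ yields $L_{\alpha}(\tilde q,l^{\star}_{\alpha})\le\sup_{l\in\Lcal}L_{\alpha}(\tilde q,l)=m$, while \pref{lem:saddle} (global minimality of $q^{\star}_{\alpha}$ in the first argument) gives the reverse $L_{\alpha}(\tilde q,l^{\star}_{\alpha})\ge m$; hence $L_{\alpha}(\tilde q,l^{\star}_{\alpha})=m$, so $\tilde q$ is itself a global minimizer over $\Hcal$ of the one-variable functional $q\mapsto L_{\alpha}(q,l^{\star}_{\alpha})$. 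It then remains to show this functional has a minimizer that is unique in the $\|\cdot\|_2=L^2(P_b)$ sense, which is exactly uniqueness on $(\Scal\times\Acal)_b$.

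The crux, and the step I expect to be the main obstacle, is quantifying the strict convexity of $q\mapsto L_{\alpha}(q,l^{\star}_{\alpha})$. I would evaluate $L_{\alpha}$ at the midpoint $\bar q=(q^{\star}_{\alpha}+\tilde q)/2\in\Hcal$. The term $\gamma\EE[\Omega_{\alpha,\pi_b}(q)(s')\,l^{\star}_{\alpha}(s,a)]$ is convex in $q$, because $\Omega_{\alpha,\pi_b}$ is a log-sum-exp (hence convex) and Assumption~\ref{assum:scale} guarantees $l^{\star}_{\alpha}\ge0$, so the integrand is a nonnegative multiple of a convex function; the term $\EE[(r-q(s,a))l^{\star}_{\alpha}(s,a)]$ is affine. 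Only the quadratic $\EE[q^2(s,a)/2]$ contributes strict curvature, via the exact identity $\EE[\bar q^2/2]=\tfrac12\EE[(q^{\star}_{\alpha})^2/2]+\tfrac12\EE[\tilde q^2/2]-\tfrac18\|\tilde q-q^{\star}_{\alpha}\|_2^2$. Adding the three pieces gives $L_{\alpha}(\bar q,l^{\star}_{\alpha})\le m-\tfrac18\|\tilde q-q^{\star}_{\alpha}\|_2^2$; since $m$ is the global minimum over $\Hcal$ and $\bar q\in\Hcal$, the left side is $\ge m$, forcing $\|\tilde q-q^{\star}_{\alpha}\|_2=0$, i.e. $\tilde q=q^{\star}_{\alpha}$ on $(\Scal\times\Acal)_b$. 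The delicate point is precisely the appeal to $l^{\star}_{\alpha}\ge0$ (supplied by Assumption~\ref{assum:scale}) to keep the $\Omega$-term convex: without it the curvature of the quadratic could be cancelled and identifiability on the support would fail. Working with the midpoint inequality rather than literal second derivatives also sidesteps integrability/differentiability subtleties of $\Omega_{\alpha,\pi_b}$ over the possibly unbounded $\Hcal$, using only that $l^{\star}_{\alpha}$ is bounded by \pref{prop:scale2}.
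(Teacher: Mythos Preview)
Your proposal is correct and follows the same overall architecture as the paper's proof: use the saddle-point property of \pref{lem:saddle} together with $q^{\star}_{\alpha}\in\Qcal$, $l^{\star}_{\alpha}\in\Lcal$ to squeeze any minimizer $\tilde q$ down to $L_{\alpha}(\tilde q,l^{\star}_{\alpha})\le L_{\alpha}(q^{\star}_{\alpha},l^{\star}_{\alpha})$, and then exploit strong convexity of $q\mapsto L_{\alpha}(q,l^{\star}_{\alpha})$ at $q^{\star}_{\alpha}$ to force $\|\tilde q-q^{\star}_{\alpha}\|_2=0$.

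The only substantive difference is in how the strong-convexity step is executed. The paper simply invokes the quantitative first-order inequality established inside the proof of \pref{lem:saddle}, namely $0.5\,\|q-q^{\star}_{\alpha}\|_2^2\le L_{\alpha}(q,l^{\star}_{\alpha})-L_{\alpha}(q^{\star}_{\alpha},l^{\star}_{\alpha})$ (equation \pref{eq:important}), which was derived via the convexity of $\Omega_{\alpha,\pi_b}$, $l^{\star}_{\alpha}\ge0$, \emph{and} the adjoint identity of \pref{lem:adjoint}. You instead run a midpoint argument on $\bar q=(q^{\star}_{\alpha}+\tilde q)/2\in\Hcal$, using only the convexity of the $\Omega$-term (with $l^{\star}_{\alpha}\ge0$) and the parallelogram identity for the quadratic, together with the fact that $q^{\star}_{\alpha}$ globally minimizes $L_{\alpha}(\cdot,l^{\star}_{\alpha})$ over $\Hcal$ from \pref{lem:saddle}. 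Your route yields the weaker constant $1/8$ but is entirely adequate here since the conclusion is qualitative; it is also slightly more self-contained in that it does not re-enter the adjoint calculation. The paper's route has the advantage that the sharp inequality \pref{eq:important} is reused verbatim in the finite-sample analysis (\pref{thm:convergence_soft}), so no extra work is needed there.
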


This establishes that realizability ($q^{\star}_{\alpha} \in \Qcal,\,l^{\star}_{\alpha}\in \Lcal$)
is sufficient to identify $q^{\star}_{\alpha}$ on the offline data distribution. At a high level, $q^{\star}_{\alpha} \in  \argmin_{q \in \Qcal}\sup_{l \in \Lcal}L_{\alpha}(q,l)$ is established through the invariance of saddle points, \ie, saddle points over original sets remain saddle points over restricted sets. Its uniqueness is verified by the strong convexity in $q$ of $L_{\alpha}(q,l)$ induced by $\EE_{(s,a)\sim P_b}[q^2(s,a)]$.

\subsection{$L^2$-convergence Rate for   
Soft Q-estimators}\label{subsec:finite}

Based on the population-level results in \pref{sec:identification}, we give a finite-sample error analysis of $\hat q_\alpha$  
\begin{assum}[Realizability of soft Q-function]\label{assum:softmax}
   Suppose $q^{\star}_{\alpha} \in \Qcal$ and $\|q\|_{\infty}\leq B_{\Qcal}\,\forall q\in \Qcal$.
\end{assum}

\begin{assum}[Realizability of Lagrange multiplier]\label{assum:largrange}
Suppose $l^{\star}_{\alpha} \in \Lcal$ and $\|l\|_{\infty}\leq B_{\Lcal}\,\forall l \in \Lcal$.
\end{assum}

It is natural to set $B_{\Qcal}=(1-\gamma)^{-1}R_{\max}$ and $B_{\Lcal}=(1-\gamma)^{-1}R_{\max}\|d_{\pi^*_{\alpha},P_b}/P_b\|_{\infty}$, but letting these be arbitrary offers further flexibility to our results.

\begin{theorem}[$L^2$-convergence of soft Q-estimators]\label{thm:convergence_soft}
Suppose Assumptions~\ref{assum:scale2}, \ref{assum:scale}, \ref{assum:softmax}, and \ref{assum:largrange} hold. Then, with probability $1-\delta$, the $L^2$-error $\|\hat q_{\alpha}-q^{\star}_{\alpha}\|_2$ is upper-bounded by $$
 c \prns{  \Bcal^2_{\Qcal}+\Bcal_{\Qcal}\Bcal_{\Lcal}\{\alpha+\ln(|\Acal|)\}}^{1/2} \left(  \ln(|\Qcal||\Lcal|/\delta)/n\right)^{1/4}.  $$
\end{theorem}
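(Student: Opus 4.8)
The plan is to convert the $L^2$-error into a suboptimality gap of the population objective and then exploit the strong convexity established around the saddle point. Concretely, using \pref{lem:saddle2} together with the quadratic term $\EE_{(s,a)\sim P_b}[q^2/2]$ inside $L_\alpha$, I would first prove a quadratic-growth inequality $L_\alpha(q,l^{\star}_{\alpha})-L_\alpha(q^{\star}_{\alpha},l^{\star}_{\alpha})\ge \tfrac12\|q-q^{\star}_{\alpha}\|_2^2$ for every $q\in\Hcal$. This holds because $\Omega_{\alpha,\pi_b}$ is convex and $l^{\star}_{\alpha}\ge 0$ (guaranteed under \pref{assum:scale2} and \pref{assum:scale}), so $q\mapsto L_\alpha(q,l^{\star}_{\alpha})$ is $1$-strongly convex in $\|\cdot\|_2$; the first-order term vanishes since $q^{\star}_{\alpha}$ minimizes $L_\alpha(\cdot,l^{\star}_{\alpha})$ over $\Hcal$, which is exactly the content of the adjoint identity \pref{lem:adjoint}. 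The fourth-root rate is already visible here: if the left-hand gap can be bounded by $O(\sqrt{\ln(|\Qcal||\Lcal|/\delta)/n})$, taking square roots produces the claimed $(\cdot/n)^{1/4}$ dependence for $\|\hat q_{\alpha}-q^{\star}_{\alpha}\|_2$.

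Second, I would control this gap at $q=\hat q_{\alpha}$ through the empirical optimality of the estimator. Writing $\hat L_\alpha(q,l)=\EE_n[q^2/2+\{\gamma\Omega_{\alpha,\pi_b}(q)(s')+r-q\}l]$, realizability $l^{\star}_{\alpha}\in\Lcal$ (\pref{assum:largrange}) yields $\hat L_\alpha(\hat q_{\alpha},l^{\star}_{\alpha})\le \max_{l\in\Lcal}\hat L_\alpha(\hat q_{\alpha},l)\le \max_{l\in\Lcal}\hat L_\alpha(q^{\star}_{\alpha},l)$, where the last step uses that $\hat q_{\alpha}$ minimizes the empirical inner maximum. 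On the population side the soft Bellman equation makes the residual $\gamma\Omega_{\alpha,\pi_b}(q^{\star}_{\alpha})(s')+r-q^{\star}_{\alpha}$ conditionally mean-zero, so $L_\alpha(q^{\star}_{\alpha},l)=\EE[(q^{\star}_{\alpha})^2/2]$ for every $l$ and hence $\max_{l\in\Lcal}L_\alpha(q^{\star}_{\alpha},l)=L_\alpha(q^{\star}_{\alpha},l^{\star}_{\alpha})$. Chaining these facts with the quadratic-growth bound reduces the entire theorem to controlling the two uniform deviations $\sup_{q\in\Qcal}|\hat L_\alpha(q,l^{\star}_{\alpha})-L_\alpha(q,l^{\star}_{\alpha})|$ and $\sup_{l\in\Lcal}|\hat L_\alpha(q^{\star}_{\alpha},l)-L_\alpha(q^{\star}_{\alpha},l)|$.

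Third, each deviation is a supremum of i.i.d.\ averages over the finite classes $\Qcal,\Lcal$, so I would apply a Bernstein (or Hoeffding) inequality with a union bound, which supplies the $\ln(|\Qcal||\Lcal|/\delta)$ factor. The prefactor is the range/variance proxy of the per-sample loss: the term $q^2/2$ contributes $O(B_{\Qcal}^2)$ (using \pref{assum:softmax}), while the cross term $\{\gamma\Omega_{\alpha,\pi_b}(q)(s')+r-q\}\,l$ contributes $O(B_{\Qcal}B_{\Lcal}\,m_\alpha)$, where $m_\alpha$ is the magnitude of the soft Bellman backup. The factor $\{\alpha+\ln|\Acal|\}$ is precisely $m_\alpha$, obtained by bounding the log-sum-exp operator $\Omega_{\alpha,\pi_b}$, whose temperature/entropy contribution to the soft value is of this order. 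Substituting the two deviation bounds into the gap and invoking quadratic growth gives $\|\hat q_{\alpha}-q^{\star}_{\alpha}\|_2\le c\,(B_{\Qcal}^2+B_{\Qcal}B_{\Lcal}\{\alpha+\ln|\Acal|\})^{1/2}(\ln(|\Qcal||\Lcal|/\delta)/n)^{1/4}$.

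I expect the main obstacle to be the interface between the inner maximization over the restricted class $\Lcal$ and the population saddle structure: one must ensure the relevant pivot is the true multiplier $l^{\star}_{\alpha}$ (used through realizability to lower-bound the empirical inner max at $\hat q_{\alpha}$) while simultaneously upper-bounding $\max_{l\in\Lcal}\hat L_\alpha(q^{\star}_{\alpha},l)$ by its population value uniformly over $\Lcal$, and these two roles of $\Lcal$ must be reconciled without assuming $\hat q_{\alpha}$ lies in any neighborhood of $q^{\star}_{\alpha}$. The remaining delicate but routine step is the boundedness bookkeeping for $\Omega_{\alpha,\pi_b}$ that yields the sharp $\{\alpha+\ln|\Acal|\}$ dependence rather than a cruder $B_{\Qcal}$ factor.
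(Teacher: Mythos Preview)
Your proposal is correct and coincides with the paper's proof. The paper establishes exactly your quadratic-growth inequality (this is inequality~\eqref{eq:important} in the proof of \pref{lem:saddle}, derived via convexity of $\Omega_{\alpha,\pi_b}$, $l^{\star}_{\alpha}\ge 0$, and the adjoint identity), then uses the same telescoping decomposition you describe---labelled (a)--(e), with $\hat l(q)=\argmax_{l\in\Lcal}\hat L_\alpha(q,l)$ playing the role of your $\max_{l\in\Lcal}$---so that (a),(c),(d) are nonpositive by the saddle/optimality properties and (b),(e) are the two concentration terms you isolate, bounded by Hoeffding with a union bound and the LogSumExp inequality for the $\Omega_{\alpha,\pi_b}$ range bookkeeping.
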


Our result is significant as it relies on realizability-type conditions rather than Bellman closedness. Since the majority of existing works focus on non-regularized Q-functions, we postpone the comparison to these existing works to the next section.
 Note when $\Qcal$ and $\Lcal$ are infinite, we can easily replace $|\Qcal|,|\Lcal|$ with their $L^{\infty}$-covering numbers following \citet{uehara2021finite}. Details are given in the appendix.

\subsection{$L^2$-convergence Rate for  $Q^{\star}$-functions}

Next, we give analogous finite-sample error analysis of $\hat q_0$ leveraging the same reasoning.  

\begin{assum}[Realizability of $Q^{\star}$-functions]\label{assum:q_star}
Suppose $q^{\star}\in \Qcal$ and $\|q\|_{\infty}\leq B_{\Qcal}\,\forall q\in \Qcal$. 
\end{assum}

Next, we define the Lagrange multiplier:
\begin{align*}
   \textstyle \{P^{\star}\}^{\top}&:\Hcal \ni f \mapsto \int P(\cdot\mid s,a)\pi^{\star}(\cdot\mid \cdot)f(s,a)\mathrm{d}\mu(s,a)\in \Hcal, 
\\
l^{\star} &\coloneqq \{(I-\gamma \{P^{\star}\}^{\top})^{-1 }(q^{\star}P_{\pi_b})\}/P_{\pi_b}.
\end{align*}
{\finaledit While $l^{\star} $ involves the density ratio, this is always well-defined as long as $\|d_{\pi^{\star},{P_b}}/P_b\|_{\infty}<\infty$. %
} Then, it can be similarly established that $(q^{\star},l^{\star})$ is a saddle point of $L_0(q,l)$ over $q\in \Hcal,l\in \Hcal$ as we show in \pref{lem:saddle}. We lastly require its realizability.

\begin{assum}[Realizability of Lagrange multiplier]\label{assum:lagrange2}
Suppose  $\|d_{\pi^{\star},{P_b}}/P_b\|_{\infty}<\infty$ and $l^{\star} \in \Lcal$. Further suppose $\|l\|_{\infty}\leq B_{\Lcal}\,\forall l\in \Lcal$. 
\end{assum}

\begin{theorem}[$L^2$-convergence of Q-estimators]\label{thm:convergence_q}
Suppose Assumptions~\ref{assum:q_star} and \ref{assum:lagrange2} hold. Then, with probability $1-\delta$, the $L^2$-error $\|\hat q_{0}-q^{\star}\|_2$ is upper-bounded by $$
 c\prns{  \Bcal^2_{\Qcal}+  \Bcal_{\Lcal}\Bcal_{\Qcal}}^{1/2} (\ln(|\Qcal||\Lcal|/\delta)/n)^{1/4}.$$
\end{theorem}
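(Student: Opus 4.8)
The plan is to mirror the argument behind \pref{thm:convergence_soft}, taking the $\alpha\to 0$ limit of every ingredient. Writing the per-sample integrand as $\phi_{q,l}(s,a,r,s'):=q^2(s,a)/2+\{\gamma\max_{a'}q(s',a')+r-q(s,a)\}\,l(s,a)$, we have $L_0(q,l)=\EE[\phi_{q,l}]$ and the empirical objective in \pref{eq:q_objective} equals $\EE_n[\phi_{q,l}]$. Set $\ell(q):=\sup_{l\in\Lcal}L_0(q,l)$ and $\hat\ell(q):=\sup_{l\in\Lcal}\EE_n[\phi_{q,l}]$. Since $\hat q_0\in\argmin_{q\in\Qcal}\hat\ell(q)$ and $q^{\star}\in\Qcal$ by \pref{assum:q_star}, decomposing $\ell(\hat q_0)-\ell(q^{\star})$ into $\{\ell(\hat q_0)-\hat\ell(\hat q_0)\}+\{\hat\ell(\hat q_0)-\hat\ell(q^{\star})\}+\{\hat\ell(q^{\star})-\ell(q^{\star})\}$ and discarding the nonpositive middle term yields $\ell(\hat q_0)-\ell(q^{\star})\le 2\sup_{q\in\Qcal,l\in\Lcal}|(\EE-\EE_n)[\phi_{q,l}]|$. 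It therefore suffices to (i) lower bound the left side by a multiple of $\|\hat q_0-q^{\star}\|_2^2$, and (ii) upper bound the right side by concentration.

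For step (i) I would first record the population identity $\ell(q^{\star})=L_0(q^{\star},l^{\star})=\EE[(q^{\star})^2/2]$: on $(\Scal\times\Acal)_b$ the residual $\gamma\max_{a'}q^{\star}(s',a')+r-q^{\star}(s,a)$ has zero conditional mean, so the bilinear term vanishes for every $l$. The heart of the proof is the quadratic lower bound $L_0(q,l^{\star})-L_0(q^{\star},l^{\star})\ge \tfrac12\|q-q^{\star}\|_2^2$ for all $q\in\Hcal$. The main obstacle is the nonlinearity of $\max_{a'}$, which blocks the direct adjoint manipulation available in the soft setting. I would resolve it using two facts: $l^{\star}\ge 0$ (since $q^{\star}\ge 0$ because $R_{\min}\ge 0$, and $(I-\gamma\{P^{\star}\}^{\top})^{-1}=\sum_{t\ge 0}\gamma^t(\{P^{\star}\}^{\top})^t$ preserves nonnegativity), and the pointwise bound $\max_{a'}q(s',a')\ge (P^{\star}q)(s')$ with equality at $q^{\star}$ (as $\pi^{\star}$ is greedy for $q^{\star}$), where $P^{\star}\colon f\mapsto \EE_{s'\sim P(\cdot\mid s,a),a'\sim\pi^{\star}}[f(s',a')\mid (s,a)=\cdot]$ is the greedy forward operator. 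Because $l^{\star}\ge 0$, these give $\gamma\EE[\{\max_{a'}q(s',a')-\max_{a'}q^{\star}(s',a')\}l^{\star}]\ge \gamma\langle l^{\star},P^{\star}(q-q^{\star})\rangle$, so $L_0(q,l^{\star})-L_0(q^{\star},l^{\star})\ge \EE[(q^2-(q^{\star})^2)/2]-\langle l^{\star},(I-\gamma P^{\star})(q-q^{\star})\rangle$. Applying the adjoint identity $\langle l^{\star},(I-\gamma P^{\star})h\rangle=\langle q^{\star},h\rangle$ (the un-regularized analog of \pref{lem:adjoint} underlying the saddle property asserted with \pref{lem:saddle}) with $h=q-q^{\star}$ collapses the right side to $\EE[(q^2-(q^{\star})^2)/2-q^{\star}(q-q^{\star})]=\tfrac12\|q-q^{\star}\|_2^2$. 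Since $l^{\star}\in\Lcal$ by \pref{assum:lagrange2}, $\ell(q)\ge L_0(q,l^{\star})$, and combined with $\ell(q^{\star})=L_0(q^{\star},l^{\star})$ this gives $\ell(\hat q_0)-\ell(q^{\star})\ge \tfrac12\|\hat q_0-q^{\star}\|_2^2$.

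For step (ii), under \pref{assum:q_star} and \pref{assum:lagrange2} each summand $\phi_{q,l}$ is bounded: $|q^2/2|\le B_{\Qcal}^2/2$ and the bilinear term is at most $(\gamma B_{\Qcal}+R_{\max}+B_{\Qcal})B_{\Lcal}$, which is $O(B_{\Qcal}B_{\Lcal})$ at the natural scale $B_{\Qcal}=(1-\gamma)^{-1}R_{\max}$, so the range of $\phi_{q,l}$ is $O(B_{\Qcal}^2+B_{\Qcal}B_{\Lcal})$. For finite $\Qcal,\Lcal$, Hoeffding's inequality for the i.i.d.\ average $\EE_n[\phi_{q,l}]$ together with a union bound over the $|\Qcal||\Lcal|$ pairs gives, with probability $1-\delta$, $\sup_{q,l}|(\EE-\EE_n)[\phi_{q,l}]|\le c\,(B_{\Qcal}^2+B_{\Qcal}B_{\Lcal})\sqrt{\ln(|\Qcal||\Lcal|/\delta)/n}$. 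Chaining with step (i) and the decomposition yields $\tfrac12\|\hat q_0-q^{\star}\|_2^2\le c'\,(B_{\Qcal}^2+B_{\Qcal}B_{\Lcal})\sqrt{\ln(|\Qcal||\Lcal|/\delta)/n}$, and taking square roots produces the stated bound, the fourth-root rate arising from squaring out the $L^2$-error. The infinite-class case follows by replacing $|\Qcal|,|\Lcal|$ with their $L^{\infty}$-covering numbers exactly as for \pref{thm:convergence_soft}.
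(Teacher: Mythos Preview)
Your proposal is correct and follows essentially the same approach as the paper: nonnegativity of $l^{\star}$ from $q^{\star}\ge 0$, the subgradient inequality for $\max$ (what the paper labels ``Convexity''), the adjoint identity $\langle l^{\star},(I-\gamma P^{\star})h\rangle=\langle q^{\star},h\rangle$ to obtain the quadratic lower bound $L_0(q,l^{\star})-L_0(q^{\star},l^{\star})\ge\tfrac12\|q-q^{\star}\|_2^2$, and Hoeffding with a union bound for concentration. The only cosmetic difference is that the paper telescopes $L_0(\hat q_0,l^{\star})-L_0(q^{\star},l^{\star})$ directly into five pieces (three nonpositive by the saddle/estimator properties, two controlled by concentration), whereas you work through $\ell(q)=\sup_{l\in\Lcal}L_0(q,l)$ and use $\ell(q^{\star})=L_0(q^{\star},l^{\star})$, $\ell(\hat q_0)\ge L_0(\hat q_0,l^{\star})$; the two decompositions are equivalent and yield the same bound.
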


{\finaledit To the best of our knowledge, this is the first guarantee on $L^2$ errors for learning $q^*$ using \emph{general function approximation without relying on Bellman completeness}. This is highly nontrivial, and we have carefully crafted our algorithm to obtain this guarantee. Existing results are often specific to particular models, such as linear models \citep{shi2022statistical}, or they require Bellman completeness \citep{chen2019information,chen2022well}, or they are limited to offline policy evaluation scenarios \citep{huang2022beyond} (\ie, cases involving linear Bellman operators, but nonlinear Bellman operators). Actually, it seems that even under the assumption of Bellman completeness, obtaining an L2 guarantee \emph{without strong coverage assumptions} remains unclear. A detailed comparison among these different approaches is presented in Section \ref{sec:contribution_over}.
}

\section{Finite Sample Guarantee of MSQP}\label{sec:theory}

In this section, we present our primary sample complexity guarantee for our  MSQP algorithm under the assumptions of realizability of  $q^{\star}_{\alpha}$ and $l^{\star}_{\alpha}$  and partial coverage. We first show the learned policy $\hat \pi_{\alpha}$ can compete with $\pi^{\star}_{\alpha}$.  Finally we show  $\hat \pi_{\alpha}$ can compete with $\pi^{\star}$ by selecting $\alpha$ properly.   

We first introduce the flattened behavior policy $\pi^{\diamond}_b$, which is uniform on the support of $\pi_b$.
We use it as a technical device to define a model-free concentrability coefficient following \citet{xie2021bellman}. 
\begin{definition}[Model-free concentrability coefficient]
Define
\begin{align*}
    C_{\Qcal,d_{\pi^{\star}_{\alpha},\mu_0} }\coloneqq \sup_{q\in \Qcal}\frac{\EE_{s \sim d_{\pi^{\star}_{\alpha},\mu_0},a\sim \pi^{\diamond}_b(a\mid s) }[\|q(s,a) -q^{\star}_{\alpha}(s,a)\|^2_2 ]}{\EE_{(s,a)\sim P_b}[\|q(s,a) -q^{\star}_{\alpha}(s,a)\|^2_2]}
\end{align*}
where $\pi^{\diamond}_b(\cdot \mid s)=\begin{cases}
     0  \quad & \pi_b(\cdot \mid s) = 0 \\
     1/|\{a \in \Acal\mid \pi_b(a \mid s)>0\}|  \; & \pi_b(\cdot \mid s)>0
    \end{cases}$ is the flattened behavior policy.
\end{definition}
Clearly, $C_{\Qcal,d_{\pi^{\star}_{\alpha},\mu_0} }$ is smaller than density-ratio-based concentrability coefficient, in other words, 
$$C_{\Qcal,d_{\pi^{\star}_{\alpha},\mu_0} }\leq \max_{(s,a)} \frac{d_{\pi^{\star}_{\alpha},\mu_0}(s)\pi^{\diamond}_b(a\mid s)}{P_b(s)\pi_b(a\mid s)}.
$$ 
Here, we always have $\|\pi^{\diamond}_b/\pi_b\|< \infty$ even if $\pi_b(a\mid s)$ is $0$ for some $(s,a)$. In the special case where $\pi_b(a\mid s)\geq 1/C'$ for any $(s,a)$, we have $C_{\Qcal,d_{\pi^{\star}_{\alpha},\mu_0} }\leq C'\|d_{\pi^{\star}_{\alpha},\mu_0}/P_b\|_{\infty} $. 
The coefficient $C_{\Qcal,d_{\pi^{\star}_{\alpha},\mu_0} }$ is is a refined concentrability coefficient, which adapts to a function class $\Qcal$. For example, in linear MDPs, it reduces to a relative condition number as follows. Similar properties are obtained in related works \citep{xie2021bellman,uehara2021pessimistic}. 

\begin{example}[Linear MDPs]
A linear MDP is one such that, for a known feature vector $\phi:\Scal \times \Acal \to \RR^d$, the true density satisfies $P(s' \mid s,a)=\langle \mu^{\star}(s'),\phi(s,a)\rangle $ for some $\mu^{\star}:\Scal \to \RR^d$ and the reward function satisfies $\EE[r\mid s,a]=\langle \theta_r, \phi(s,a) \rangle $ for some $\theta_r \in \RR^d$. 

In linear MDPs, $q^{\star}_{\alpha}$ is clearly linear in $\phi(s,a)$. Hence, the natural function class is $\Qcal=\{\langle \theta,\phi(s,a)\rangle \mid \|\theta\|\leq B\}$ for a certain $B\in \mathbb{R}^{+}$. Then, we have 
\begin{align*}
C_{\Qcal,d_{\pi^{\star}_{\alpha},\mu_0} }=\sup_{x\neq 0}\frac{x^{\top}\EE_{s \sim d_{\pi^{\star}_{\alpha},\mu_0}, a\sim \pi^{\diamond}_b(a\mid s) }[\phi(s,a)\phi(s,a)^{\top}]x  }{x^{\top}\EE_{(s,a)\sim P_b}[\phi(s,a)\phi(s,a)^{\top}]x}. 
\end{align*}
\end{example}

We are now prepared to present our main result, which states that given the realizability of the soft Q-function $q^{\star}_{\alpha}$ and Lagrange multiplier $l^{\star}_{\alpha}$, it is possible to compete with $\pi^{\star}_{\alpha}$ under the coverage condition $C_{\Qcal,d_{\pi^{\star}_{\alpha},\mu_0} }<\infty,\|d_{\pi^{\star}_{\alpha},P_b}/P_b\|_{\infty}<\infty $. 

\begin{theorem}[$\hat \pi_{\alpha}$ can compete with $\pi^{\star}_{\alpha}$]\label{thm:main}
Fix $\alpha>0$. Suppose Assumptions \pref{assum:scale2}, \pref{assum:scale}, \pref{assum:softmax}, and \pref{assum:largrange} hold. 
With probability $1-\delta$, the regret $ J(\pi^{\star}_{\alpha})-J(\hat \pi_{\alpha})$  is upper-bounded by 
\begin{align*}
n^{-1/4}\mathrm{Poly}\left(|\Acal|,\Bcal_{\Qcal},\Bcal_{\Lcal},C_{\Qcal,d_{\pi^{\star}_{\alpha},\mu_0} }, \ln\left(\frac{|\Qcal||\Lcal|}{\delta}\right),\frac{1}{1-\gamma},R_{\max}\right)
\end{align*}

\end{theorem}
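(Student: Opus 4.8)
The plan is to first control the \emph{regularized} regret $J_\alpha(\pi^\star_\alpha)-J_\alpha(\hat\pi_\alpha)$, reduce it to the $L^2$-error $\|\hat q_\alpha-q^\star_\alpha\|_2$ controlled by \pref{thm:convergence_soft}, and then convert to the unregularized $J$. Introduce the soft Bellman residual $\mathcal E(q)(s,a):=q(s,a)-\EE[r\mid s,a]-\gamma\EE_{s'}[\Omega_{\alpha,\pi_b}(q)(s')]$ (so $\mathcal E(q^\star_\alpha)=0$) and the soft value $V_q:=\Omega_{\alpha,\pi_b}(q)$; note $\hat\pi_\alpha$ is exactly the soft-greedy policy of $\hat q_\alpha$, hence $\alpha\log(\hat\pi_\alpha/\pi_b)=\hat q_\alpha-V_{\hat q_\alpha}$ pointwise. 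The engine is the telescoping identity, valid for any $q$ and policy $\pi$ with soft-greedy policy $\pi_q$,
\[ \EE_{\mu_0}[V_q]-J_\alpha(\pi)=\tfrac{1}{1-\gamma}\EE_{(s,a)\sim d_{\pi,\mu_0}}\big[\alpha\log(\pi(a\mid s)/\pi_q(a\mid s))+\mathcal E(q)(s,a)\big], \]
obtained from the occupancy identity $\EE_{\mu_0}[h]=\tfrac{1}{1-\gamma}\EE_{d_{\pi,\mu_0}}[h(s)-\gamma\EE_{s'}h(s')]$ with $h=V_q$. Applying it with $q=\hat q_\alpha$ at $\pi=\pi^\star_\alpha$ and at $\pi=\hat\pi_\alpha$ (where the log-ratio vanishes) and subtracting, the nonnegative $\alpha\KL(\pi^\star_\alpha\|\hat\pi_\alpha)$ term may be dropped to give
\[ J_\alpha(\pi^\star_\alpha)-J_\alpha(\hat\pi_\alpha)\le\tfrac{1}{1-\gamma}\Big(\EE_{d_{\hat\pi_\alpha,\mu_0}}[\mathcal E(\hat q_\alpha)]-\EE_{d_{\pi^\star_\alpha,\mu_0}}[\mathcal E(\hat q_\alpha)]\Big). \]
A mean-value expansion of $\Omega_{\alpha,\pi_b}$ around $q^\star_\alpha$ gives $|\mathcal E(\hat q_\alpha)(s,a)|\le|\hat q_\alpha-q^\star_\alpha|(s,a)+\gamma\EE_{s'}\EE_{a'\sim\tilde\pi}|\hat q_\alpha-q^\star_\alpha|(s',a')$ for some $\tilde\pi$ supported on the support of $\pi_b$, so both occupancy terms become integrals of $|\hat q_\alpha-q^\star_\alpha|$.

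For the comparator term $\EE_{d_{\pi^\star_\alpha,\mu_0}}[\,\cdot\,]$ I would change measure to $P_b$. Since $\pi^\star_\alpha$ and $\tilde\pi$ are supported on the support of $\pi_b$, bounding $\pi^\star_\alpha(a\mid s)\le|\Acal|\,\pi^\diamond_b(a\mid s)$ replaces the action laws by the flattened $\pi^\diamond_b$ at the cost of a factor $|\Acal|$, while the one-step state pushforward costs a factor $\le 1/\gamma$ via the flow inequality $\{P^\star_\alpha\}^\top d_{\pi^\star_\alpha,\mu_0}\le d_{\pi^\star_\alpha,\mu_0}/\gamma$. Cauchy--Schwarz and the definition of $C_{\Qcal,d_{\pi^\star_\alpha,\mu_0}}$ then yield $\EE_{d_{\pi^\star_\alpha,\mu_0}}[|\hat q_\alpha-q^\star_\alpha|]\le c\,|\Acal|\,C_{\Qcal,d_{\pi^\star_\alpha,\mu_0}}^{1/2}\|\hat q_\alpha-q^\star_\alpha\|_2$, which is $O(n^{-1/4})$ by \pref{thm:convergence_soft}. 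The passage from $J_\alpha$ to $J$ uses $J(\pi)=J_\alpha(\pi)+\tfrac{\alpha}{1-\gamma}\EE_{d_{\pi,\mu_0}}[\KL(\pi\|\pi_b)]$; crucially, the softmax identity $\alpha\log(\pi/\pi_b)=q-V_q$ shows the correction $\alpha\KL$ is itself $O(\|\hat q_\alpha-q^\star_\alpha\|)$ with no $1/\alpha$ blow-up, so it is absorbed into the same rate.

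The hard part is the remaining term $\EE_{d_{\hat\pi_\alpha,\mu_0}}[\mathcal E(\hat q_\alpha)]$: the occupancy of the \emph{learned} policy is data-dependent and may place mass on states outside the support of $P_b$, where $\hat q_\alpha$ is unconstrained, so the concentrability/change-of-measure argument above is unavailable and only a one-sided (upper) bound can be hoped for. This is exactly where the squared penalty in \pref{eq:goal-pre}--\pref{eq:constraint1} matters: replacing the equality constraint by $q^2$ relaxes it to $\mathcal E(q)\le0$, so the learned $\hat q_\alpha$ is an approximate \emph{sub-solution} of the soft Bellman optimality operator and $V_{\hat q_\alpha}$ is therefore a pessimistic value, giving $\EE_{d_{\hat\pi_\alpha,\mu_0}}[\mathcal E(\hat q_\alpha)]\le \mathrm{slack}$ without requiring coverage of $d_{\hat\pi_\alpha,\mu_0}$ --- this is the promised pessimism that is present \emph{implicitly} rather than enforced. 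I would quantify the slack through the only place the constraint is imposed, the weighted violation $\EE_{P_b}[l\,\mathcal E(\hat q_\alpha)]$ over $l\in\Lcal$ (in particular $l=l^\star_\alpha$, finite by \pref{assum:scale2} and \pref{prop:scale2}), bounding it once more by $\|\hat q_\alpha-q^\star_\alpha\|_2$; making this off-support step fully rigorous is the technical crux. Assembling the three pieces, the regret is linear in the $L^2$-error, so \pref{thm:convergence_soft} delivers the $n^{-1/4}\,\mathrm{Poly}(|\Acal|,\Bcal_{\Qcal},\Bcal_{\Lcal},C_{\Qcal,d_{\pi^\star_\alpha,\mu_0}},\ln(|\Qcal||\Lcal|/\delta),(1-\gamma)^{-1},R_{\max})$ bound, the $n^{-1/4}$ rate being inherited directly from the $L^2$ guarantee.
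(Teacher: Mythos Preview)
Your decomposition via the soft Bellman residual $\mathcal E(\hat q_\alpha)$ under both occupancies is \emph{not} how the paper proceeds, and the step you flag as ``the technical crux'' is a genuine gap that cannot be closed under the stated assumptions. You need $\EE_{d_{\hat\pi_\alpha,\mu_0}}[\mathcal E(\hat q_\alpha)]\le\text{slack}$, i.e.\ an (approximate) \emph{pointwise} sign constraint on $\mathcal E(\hat q_\alpha)$ over the support of the learned policy's occupancy. But the minimax objective only controls the \emph{weighted} violations $\EE_{P_b}[l\,\mathcal E(\hat q_\alpha)]$ for $l\in\Lcal$, and Assumption~\ref{assum:largrange} posits only $l^\star_\alpha\in\Lcal$ --- nothing forces $d_{\hat\pi_\alpha,\mu_0}/P_b$ (or any pointwise witness) to lie in $\Lcal$, and states reached by $\hat\pi_\alpha$ starting from $\mu_0$ need not be in $(\Scal\times\Acal)_b$ at all. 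The same off-support issue reappears in your $J_\alpha\to J$ conversion, which again involves $\EE_{d_{\hat\pi_\alpha,\mu_0}}[\cdot]$. So the ``implicit pessimism'' you invoke is not delivered by the algorithm under the theorem's hypotheses.

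The paper sidesteps this entirely with a much shorter argument that never touches $d_{\hat\pi_\alpha,\mu_0}$. It applies the performance difference lemma for the \emph{unregularized} $J$ directly, in the form
\[
J(\pi^\star_\alpha)-J(\hat\pi_\alpha)\ \le\ \frac{R_{\max}}{(1-\gamma)^{2}}\,\EE_{s\sim d_{\pi^\star_\alpha,\mu_0}}\big[\|\pi^\star_\alpha(\cdot\mid s)-\hat\pi_\alpha(\cdot\mid s)\|_1\big],
\]
so only the \emph{comparator's} occupancy appears. Then it uses the $1/\alpha$-Lipschitz property of the softmax map to bound $\|\pi^\star_\alpha(\cdot\mid s)-\hat\pi_\alpha(\cdot\mid s)\|_2$ by $\alpha^{-1}\big(\sum_{a:\pi_b(a\mid s)>0}(\hat q_\alpha-q^\star_\alpha)^2(s,a)\big)^{1/2}$, replaces the action sum by $|\Acal|\,\EE_{a\sim\pi^\diamond_b}$, and applies $C_{\Qcal,d_{\pi^\star_\alpha,\mu_0}}$ to pass to $\|\hat q_\alpha-q^\star_\alpha\|_2$, which \pref{thm:convergence_soft} bounds by $O(n^{-1/4})$. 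The key idea you are missing is precisely this softmax-Lipschitz step: it converts the policy error to the $Q$-error at the cost of a $1/\alpha$ factor but eliminates any need to control quantities under the learned policy's distribution. (Incidentally, the paper's bound therefore \emph{does} carry a $1/\alpha$, which is why Theorem~\ref{cor:main} tunes $\alpha\asymp n^{-1/8}$; your hoped-for ``no $1/\alpha$ blow-up'' would be a genuine improvement, but the route you propose does not reach it.)
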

The proof mainly consists of two steps: (1) obtaining $L^2$-errors of $\hat q_{\alpha}$ as previously demonstrated in \pref{thm:convergence_soft}, (2) translating this error into the error of $\hat \pi_{\alpha}$. In the second step, the Lipshitz continuity of the softmax function plays a crucial role. If there is no regularization ($\alpha=0$) and the greedy policy of $q^{\star}_{0}$ is utilized, the second step does not proceed (without any further additional assumptions). 

Our ultimate goal is to compete with $\pi^{\star}$. 
\pref{thm:main} serves as the primary foundation for this goal. The remaining task is to analyze the approximation error $J(\pi^{\star})-J(\pi^{\star}_{\alpha})$. Fortunately, this term can be controlled through  $\alpha$ and the density ratio between $\pi^{\star}$ and $\pi_b$. Then, by properly controlling $\alpha$, we can obtain the following sample complexity result.
 
\begin{theorem}[PAC guarantee of $\hat \pi_{\alpha}$]\label{cor:main}
Fix any $\epsilon>0$. Suppose Assumptions~\pref{assum:scale2}, \pref{assum:scale}, \pref{assum:softmax}, and \ref{assum:largrange} hold for $\alpha=c/n^{1/8}$ and  $\|\pi^{\star}_{0}/\pi_b\|_{\infty}\leq C_{0}$, $C_{\Qcal,d_{\pi^{\star}_{\alpha},\mu_0} }<\infty$. Then, if $n$ is at least  
{
\begin{align*}
\epsilon^{-8}\mathrm{Poly}(|\Acal|,\Bcal_{\Qcal},\Bcal_{\Lcal},C_{\Qcal,d_{\pi^{\star}_{\alpha},\mu_0} }, \ln(|\Qcal||\Lcal|/\delta),%
(1-\gamma)^{-1},\ln(C_0),R_{\max}), 
\end{align*}
}
with probability at least $1-\delta$, we can ensure $ J(\pi^{\star})-J(\hat \pi_{\alpha})\leq \epsilon$. 
\end{theorem}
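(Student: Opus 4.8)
The plan is to control the regret $J(\pi^{\star})-J(\hat\pi_{\alpha})$ through the decomposition
\begin{align*}
J(\pi^{\star})-J(\hat\pi_{\alpha})=\underbrace{\{J(\pi^{\star})-J(\pi^{\star}_{\alpha})\}}_{\text{regularization bias}}+\underbrace{\{J(\pi^{\star}_{\alpha})-J(\hat\pi_{\alpha})\}}_{\text{estimation error}},
\end{align*}
and then to choose the temperature $\alpha$ so as to balance the two pieces. The estimation error is already handled by \pref{thm:main}, so the new ingredients are (i) a bound on the regularization bias in terms of $\alpha$ and $C_0$, and (ii) a careful tracking of the $\alpha$-dependence hidden in the $\mathrm{Poly}(\cdot)$ factor of \pref{thm:main}, so that the balancing is legitimate.

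For the regularization bias, I would exploit that $\pi^{\star}_{\alpha}$ maximizes $J_{\alpha}$, hence $J_{\alpha}(\pi^{\star}_{\alpha})\geq J_{\alpha}(\pi^{\star})$. Writing $J_{\alpha}(\pi)=J(\pi)-\alpha(1-\gamma)^{-1}g(\pi)$ with $g(\pi):=\EE_{s\sim d_{\pi,\mu_0}}[\mathrm{KL}(\pi(\cdot\mid s)\,\|\,\pi_b(\cdot\mid s))]\geq 0$ and rearranging gives
\begin{align*}
J(\pi^{\star})-J(\pi^{\star}_{\alpha})\leq \alpha(1-\gamma)^{-1}\{g(\pi^{\star})-g(\pi^{\star}_{\alpha})\}\leq \alpha(1-\gamma)^{-1}g(\pi^{\star}),
\end{align*}
where the last step uses $g(\pi^{\star}_{\alpha})\geq 0$. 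Since $g(\pi^{\star})\leq \log\|\pi^{\star}/\pi_b\|_{\infty}=\log\|\pi^{\star}_{0}/\pi_b\|_{\infty}\leq \log C_0$, the bias is at most $\alpha(1-\gamma)^{-1}\log C_0$; it vanishes linearly in $\alpha$ and is finite precisely because $\pi^{\star}$ is covered by $\pi_b$, i.e., $C_0<\infty$. This is the only place $C_0$ enters.

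For the estimation error I would invoke \pref{thm:main}, but re-examine its proof to surface the dependence on $\alpha$ that is absorbed into $\mathrm{Poly}$ there. That proof turns the $L^2$-error $\|\hat q_{\alpha}-q^{\star}_{\alpha}\|_2=O(n^{-1/4})$ from \pref{thm:convergence_soft} into policy regret via the Lipschitz constant of $h\mapsto\mathrm{softmax}(h/\alpha+\log\pi_b)$, which scales like $1/\alpha$; consequently the estimation error is of order $\alpha^{-1}n^{-1/4}\,\mathrm{Poly}(|\Acal|,\Bcal_{\Qcal},\Bcal_{\Lcal},C_{\Qcal,d_{\pi^{\star}_{\alpha},\mu_0}},\ln(|\Qcal||\Lcal|/\delta),(1-\gamma)^{-1},R_{\max})$. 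Adding the two contributions, the total regret is of order $\alpha(1-\gamma)^{-1}\log C_0+\alpha^{-1}n^{-1/4}\mathrm{Poly}$; the two terms are equalized by taking $\alpha=c\,n^{-1/8}$, which is exactly the prescribed schedule and yields total regret of order $n^{-1/8}\mathrm{Poly}$. Finally, solving $n^{-1/8}\mathrm{Poly}\leq\epsilon$ for $n$ gives the stated sample size $n\geq \epsilon^{-8}\,\mathrm{Poly}$, with $\ln(C_0)$ now entering as a polynomial factor (from $(\log C_0)^8$).

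I expect the main obstacle to be the bookkeeping of the $\alpha$-indexed quantities along the schedule $\alpha=c\,n^{-1/8}\to 0$: the realizability requirements (Assumptions~\pref{assum:softmax} and \pref{assum:largrange}), the multiplier bound $\Bcal_{\Lcal}$, and the refined concentrability coefficient $C_{\Qcal,d_{\pi^{\star}_{\alpha},\mu_0}}$ are all functions of the $n$-dependent $\alpha$, so one must ensure they remain uniformly bounded (or assume so) as $\alpha\to 0$; otherwise the $\mathrm{Poly}$ factor itself could grow with $n$ and the rate would degrade. The second delicate point is pinning down the exact $1/\alpha$ power in the softmax-to-regret conversion, since any worse power of $1/\alpha$ would shift the optimal schedule and change the final exponent $\epsilon^{-8}$.
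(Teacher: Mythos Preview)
Your proposal is correct and follows essentially the same route as the paper: both decompose $J(\pi^{\star})-J(\hat\pi_{\alpha})$ into a regularization bias $J(\pi^{\star})-J(\pi^{\star}_{\alpha})$ and the estimation error from \pref{thm:main}, bound the bias via $J_{\alpha}(\pi^{\star}_{\alpha})\geq J_{\alpha}(\pi^{\star})$ together with $J_{\alpha}(\pi^{\star}_{\alpha})\leq J(\pi^{\star}_{\alpha})$ to obtain an $O(\alpha(1-\gamma)^{-1}\log C_0)$ term, and then balance against the $\alpha^{-1}n^{-1/4}$ estimation error by setting $\alpha\asymp n^{-1/8}$. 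Your additional remarks on the $\alpha$-dependence of $\Bcal_{\Lcal}$, $C_{\Qcal,d_{\pi^{\star}_{\alpha},\mu_0}}$, and the realizability assumptions go slightly beyond what the paper makes explicit, but they do not alter the argument.
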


{\finaledit In summary, the realiazability of $q^{\star}_{\alpha},l^{\star}_{\alpha}$, per-step coverage $\|\pi^{\star}_0/\pi_b\|_{\infty}<\infty$ and partial coverage $C_{\Qcal,d_{\pi^{\star}_{\alpha},\mu_0} }<\infty,\|d_{\pi^{\star}_{\alpha},P_b}/P_b\|_{\infty}<\infty $ are sufficient to compete with $\pi^{\star}$.} This is a novel and attractive result. Firstly, if we solely use the na\"ive FQI or Bellman residual minimization, existing PAC results require the global coverage $\|d_{\pi,\mu_0}/P_b\|_{\infty}<\infty$ for any possible policy $\pi$ \citep{munos2008finite,antos2008learning}. Our result only requires coverage under a single policy $\pi^{\star}_{\alpha}$ (near-optimal policy). Secondly, we only require the realizability of two functions, and we do not necessitate realizability-type conditions for all policies in the policy class or Bellman completeness, unlike existing works with partial coverage \citep{xie2021bellman,jiang2020minimax}.

The most similar result is \citet{zhan2022offline}. However, our guarantee possesses a certain advantage over their guarantee as follows. They demonstrate the realizability of certain functions $\tilde w^{\star}_{\alpha},v^{\star}_{\alpha}$ and partial coverage $\|d_{\tilde\pi_{\alpha},\mu_0}/P_b\|< \infty$ are sufficient conditions in offline RL, where $\tilde w^{\star}_{\alpha}=d_{\tilde\pi_{\alpha},\mu_0}/P_b$ ($\tilde\pi_{\alpha}$ is a certain regularized optimal policy, but fundamentally distinct from $\pi^{\star}_{\alpha}$) and $v^{\star}_{\alpha}$ is a near-optimal regularized value function parameterized by $\alpha$. Here, we have  $\tilde w^{\star}_{0}=w^{\star},v^{\star}_{0}=v^{\star}$. Our guarantee has a similar flavor in the sense that it roughly illustrates realizability and partial coverage are sufficient conditions. 
However, the meanings of realizability and partial coverage are significantly different. In particular, by employing our algorithm, we can ensure PAC guarantees under the boundedness of the refined concentrability coefficient $C_{\Qcal,d_{\pi^{\star}_{\alpha},\mu_0}}<\infty$ (and $\|d_{\pi^{\star}_{\alpha},P_b}/P_b\|_{\infty}$ through $\Bcal_{\Lcal}$). 
As a result, the $L^{\infty}$-norm of the density-ratio-based concentrability coefficient $\|d_{\pi^{\star}_{\alpha},\mu_0}/P_b\|_{\infty}$ can even be infinite. More specifically, we can permit situations where $\max_{s}\mu_0(s)/P_b(s)=\infty$ as we will see the practical example soon. Conversely, \citet{zhan2022offline} excludes this possibility since the algorithm explicitly estimates the density ratio $\tilde w^{\star}_{\alpha}$.

\begin{example}[Contextual bandit under external validity]
 \label{exa:external}
 We consider the contextual bandit setting where we want to optimize $J(\pi)=\EE_{s\sim \mu_0,a\sim \pi(s),r\sim P_r(s,a)}[r]$ using offline data $s \sim P_b,a\sim \pi_b(s),r\sim P(s,a)$. This is the simplest RL setting with $\gamma=0$. Here, note $\mu_0$ could be different from $P_b$. This case often happens in practice as discussed in the literature on causal inference related to external validity \citep{pearl2022external,dahabreh2019generalizing,uehara2020off}, which refers to the shift between the target population and the offline data. Here, our PAC guarantee does not require that $\mu_0(s)$ is covered by $P_b(s)$ in terms of the density ratio as long as the relative condition number is upper-bounded when we use linear models. On the other hand, \citet{zhan2022offline} excludes this possibility.  
\end{example}

{\finaledit Despite the aforementioned advantage of our approach, unfortunately, our sample complexity of $O(1/\epsilon^8)$ is slower compared to that of $O(1/\epsilon^6)$ in \citet{zhan2022offline}. In the following, we demonstrate that MQP, which is a special version of MSQP when $\alpha \to 0$, can achieve a faster rate of $O(1/\epsilon^2)$.}

\section{Finite Sample Guarantee of MQP} \label{sec:margin}

In this section, building upon the convergence result of $\hat q_0$, we demonstrate the finite sample guarantee of our MQP algorithm under partial coverage. We first introduce the soft margin.

\begin{assum}[Soft margin]\label{assum:margin}
For any $a'\in \Acal$, there exists $t_0 \in \mathbb{R}^{+},\beta \in (0,\infty]$ such that 
\begin{align*}
    \PP_{s \sim d_{{\pi^{\star},\mu_0}}}(0<|q^{\star}(s,\pi^{\star}(s))-q^{\star}(s,a')|<t)\leq (t/t_0)^{\beta} 
\end{align*}
for any $t>0$. Here, we use the convention $x^{\infty}=0$ if $0<x<1$ and $x^{\infty}=\infty$ if $x>1$. 
\end{assum}

In the extreme case, if there exists a gap in $q^{\star}$ (also known as a hard margin) so that the best action is always better than the second-best by some lower bounded amount, then the soft margin is satisfied with $\beta=\infty$. Thus, the soft margin is more general than the gap condition  used in \citet{simchowitz2019non,wu2022gap}.
Crucially, a gap generally does \textit{not} exist in continuous state spaces unless Q-functions are discontinuous or one action is always option, or a gap involves a large $t_0$ constant in discrete state spaces with bad dependence on the number of states. In contrast, a soft margin with some $\beta>0$ generally holds (see, \eg, lemma 4 in \citealp{hu2021fast}). The soft margin is widely used in the literature on classification, decision making, and RL \citep{audibert2007fast,perchet2013multi,luedtke2020performance,hu2021fast,hu2022fast}.

\begin{theorem}[PAC guarantee of $\hat \pi_0$]\label{thm:pac_q_function}
Suppose Assumptions~\ref{assum:q_star}, \ref{assum:lagrange2}, and \ref{assum:margin} hold and  $\|\pi^{\star}/\pi_b\|_{\infty}\leq C_0$. 
 Fix any $\epsilon>0$. Then, if $n$ is at least 
\begin{align*}
\textstyle
\{\frac{|\Acal|}{\epsilon}\}^{\frac{4+2\beta}{\beta}} \mathrm{Poly}\left(t^{-1}_0,|\Acal|,\Bcal_{\Qcal},\Bcal_{\Lcal},C_{\Qcal,d_{\pi^{\star},\mu_0} }, \ln\left (\frac{|\Qcal||\Lcal|}{\delta}\right) %
,(1-\gamma)^{-1},\ln(C_0),R_{\max} \right)%
\end{align*}
with probability at least $1-\delta$, we can ensure $ J(\pi^{\star})-J(\hat \pi_{0})\leq \epsilon$. 
\end{theorem}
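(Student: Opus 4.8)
The plan is to reduce the value suboptimality $J(\pi^{\star})-J(\hat\pi_0)$ to the probability that the learned greedy policy disagrees with $\pi^{\star}$ under the occupancy $d_{\pi^{\star},\mu_0}$, and then to convert the $L^2$-guarantee of \pref{thm:convergence_q} into a bound on this disagreement probability using the soft margin of \pref{assum:margin}. First I would apply the performance difference lemma in the form
\begin{align*}
J(\pi^{\star})-J(\hat\pi_0)=\tfrac{1}{1-\gamma}\,\EE_{s\sim d_{\pi^{\star},\mu_0}}\big[q^{\hat\pi_0}(s,\pi^{\star}(s))-v^{\hat\pi_0}(s)\big].
\end{align*}
Since the integrand vanishes whenever $\hat\pi_0(s)=\pi^{\star}(s)$ and is at most $R_{\max}/(1-\gamma)$ (the range of the value function, using $R_{\min}\ge 0$), this yields the clean bound $J(\pi^{\star})-J(\hat\pi_0)\le \tfrac{R_{\max}}{(1-\gamma)^2}\,\PP_{s\sim d_{\pi^{\star},\mu_0}}[\hat\pi_0(s)\neq \pi^{\star}(s)]$. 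The key point is that deliberately discarding the gap factor (rather than retaining the advantage) is what lands the whole analysis on $d_{\pi^{\star},\mu_0}$, the measure under which both \pref{assum:margin} and $C_{\Qcal,d_{\pi^{\star},\mu_0}}$ are stated; note also that $\|\pi^{\star}/\pi_b\|_{\infty}\le C_0$ guarantees $\pi^{\star}(s)$ lies in the support over which $\hat\pi_0$ takes its argmax, so the greedy inequality $\hat q_0(s,\hat\pi_0(s))\ge \hat q_0(s,\pi^{\star}(s))$ is available.

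Next I would bound the disagreement probability. Writing $e(s,a)=|\hat q_0(s,a)-q^{\star}(s,a)|$ and $\Delta(s,a')=\max_{a}q^{\star}(s,a)-q^{\star}(s,a')$, if $\hat\pi_0$ selects a strictly suboptimal $a'$ then $\hat q_0(s,a')\ge \hat q_0(s,\pi^{\star}(s))$ forces $e(s,a')+e(s,\pi^{\star}(s))\ge \Delta(s,a')$, and hence $\max\{e(s,a'),e(s,\pi^{\star}(s))\}\ge \Delta(s,a')/2$. Fixing a threshold $t>0$ and a suboptimal action $a'$, I would split $\{\hat\pi_0(s)=a'\}$ according to whether $\Delta(s,a')\le t$ or $\Delta(s,a')>t$: the first event has probability at most $(t/t_0)^{\beta}$ by \pref{assum:margin}, while on the second event the estimation error exceeds $t/2$, so Markov's inequality controls it by $4\,\EE_{s\sim d_{\pi^{\star},\mu_0}}[e(s,a')^2]/t^2$. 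To invoke the concentrability coefficient I would pass from a single action to the flattened behavior policy using $\pi^{\diamond}_b(a'\mid s)\ge 1/|\Acal|$, giving $\EE_{s\sim d_{\pi^{\star},\mu_0}}[e(s,a')^2]\le |\Acal|\,C_{\Qcal,d_{\pi^{\star},\mu_0}}\,\|\hat q_0-q^{\star}\|_2^2$ (and likewise for $e(s,\pi^{\star}(s))$), where $\hat q_0\in\Qcal$ makes the coefficient applicable. A union bound over the at most $|\Acal|$ suboptimal actions then gives, with $\rho:=\|\hat q_0-q^{\star}\|_2$,
\begin{align*}
\PP_{s\sim d_{\pi^{\star},\mu_0}}[\hat\pi_0(s)\neq \pi^{\star}(s)]\ \le\ |\Acal|\,(t/t_0)^{\beta}+\frac{8\,|\Acal|^2\,C_{\Qcal,d_{\pi^{\star},\mu_0}}\,\rho^2}{t^2}.
\end{align*}

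Finally I would optimize the free threshold. Balancing the two terms gives $t$ of order $\rho^{2/(\beta+2)}$ and hence a disagreement probability of order $\rho^{2\beta/(\beta+2)}$ up to $\mathrm{Poly}(|\Acal|,t_0^{-1},C_{\Qcal,d_{\pi^{\star},\mu_0}})$ factors. Substituting the rate $\rho\le c\,(\Bcal_{\Qcal}^2+\Bcal_{\Lcal}\Bcal_{\Qcal})^{1/2}(\ln(|\Qcal||\Lcal|/\delta)/n)^{1/4}$ from \pref{thm:convergence_q} and combining with the Step-1 reduction yields $J(\pi^{\star})-J(\hat\pi_0)$ of order $\mathrm{Poly}\cdot n^{-\beta/(2(\beta+2))}$; setting this to $\epsilon$ and inverting reproduces the stated $n\ge \{|\Acal|/\epsilon\}^{(4+2\beta)/\beta}\,\mathrm{Poly}(\cdots)$, which correctly degenerates to the $\epsilon^{-2}$ hard-margin rate as $\beta\to\infty$.

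I expect the main obstacle to be Step~1: producing a regret decomposition whose expectation is taken under $d_{\pi^{\star},\mu_0}$ while still being expressible through a $q^{\star}$-disagreement event, since the naive performance difference lemma either yields $q^{\hat\pi_0}$-advantages under $d_{\pi^{\star},\mu_0}$ or $q^{\star}$-gaps under the inaccessible $d_{\hat\pi_0,\mu_0}$. The resolution---bounding the advantage by the crude value range to obtain a pure disagreement probability---is exactly what makes both the margin and the refined concentrability coefficient usable, and it is also what pins down the exponent $(4+2\beta)/\beta$ (retaining the gap factor would instead give the different, and here unavailable, $d_{\hat\pi_0,\mu_0}$-based exponent). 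A secondary technical point is the treatment of exact ties in $q^{\star}$, which I would absorb by breaking ties in $\pi^{\star}$ to agree with $\hat\pi_0$ so that every counted disagreement involves a strictly positive gap to which \pref{assum:margin} applies.
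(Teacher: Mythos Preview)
Your proposal is correct and follows essentially the same route as the paper: both reduce $J(\pi^\star)-J(\hat\pi_0)$ to a disagreement probability under $d_{\pi^\star,\mu_0}$ via the performance difference lemma, split the disagreement event at a threshold $t$, control the small-gap piece by the soft margin and the large-gap piece by Markov on the squared $Q$-error transferred through $C_{\Qcal,d_{\pi^\star,\mu_0}}$ via the flattened policy $\pi^{\diamond}_b$, and then balance over $t$ to obtain the $n^{-\beta/(2(\beta+2))}$ rate. The only cosmetic difference is that the paper bounds the $e(s,\pi^\star(s))$ term by passing through $\pi^\star$ (picking up a $C_0$ factor in the final display), whereas you subsume it directly into the $\pi^{\diamond}_b$-expectation; both are absorbed into the $\mathrm{Poly}(\cdots)$.
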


The proof mainly consists of two steps: (1) obtaining $L^2$-errors of $\hat q_{0}$ as demonstrated in \pref{thm:convergence_q}, (2) translating this error into the error of $\hat \pi_{0}$. In the second step, the soft margin plays a crucial role.

These theorems indicate that the realizability of the $Q$-function $q^{\star}$ and Lagrange multiplier $l^{\star}$, and the soft margin are sufficient for the PAC guarantee under partial coverage $C_{\Qcal,d_{\pi^{\star},\mu_0} }<\infty$ {\finaledit , $\|d_{\pi^{\star},P_b}/P_b\|_{\infty}<\infty$ }. Our algorithm is \emph{agnostic} to $\beta$ and operates under any value of $\beta$. {\finaledit In particular, when there is a gap ($\beta=\infty$), we can achieve sample complexity of $O(1/\epsilon^2)$ \footnote{{Similar to the findings in \citet{wang2022gap}, in general offline RL, we may potentially achieve a result of $O(1/\epsilon)$. We leave room for further enhancements in future research. }
}.} In comparison to \pref{thm:main}, although we additionally use the soft margin, the realizability in \pref{thm:pac_q_function} is more appealing since it is imposed on the standard Q-function $q^{\star}$. 
The closest guarantee to our work can be found in \citet{chen2022offline}, which demonstrates that the existence of the gap in $q^{\star}$, the realizability of $q^{\star},w^{\star}(:=d_{\pi^{\star},\mu_0}/P_{\pi_b})$, and partial coverage $\|w^{\star}\|_{\infty}< \infty$ are sufficient conditions. { A similar comparison is made in \citet{ozdaglar2023revisiting}.} In comparison to their work, we use the soft margin, which is significantly less stringent.

\section{Conclusions}
 \vspace{-0.2cm}

We propose two value-based algorithms, MSQP and MQP, that operate under realizability of certain functions and partial coverage (\ie, single-policy-coverage). Notably, our guarantee does not require Bellman completeness and uniform-type realizability over the policy class. While guarantees with similar flavors are obtained in \citet{zhan2022offline,chen2022offline}, MSQP can potentially relax the density-ratio-based partial coverage regarding the initial distribution as opposed to \citet{zhan2022offline}, and MQP can operate under the soft-margin, which is less stringent than the hard margin imposed in \citet{chen2022offline}. Moreover, both algorithms work on Q-functions, which are more commonly used in practice.

\bibliography{ref}

\appendix

{

\section{More Related Works} \label{sec:related}

\label{sec:contribution_over}

We elaborate on the challenge and novelty of our guarantees in comparison to  previous related works.

\paragraph{Existing $L^2$-guarantees for offline RL \citep{shi2022statistical}.}

\citet{shi2022statistical} derived the $L^2$ convergence rate for learning the optimal $Q^{\star}$-function. However, their result is restricted to linear models. While linear models allow for converting Bellman residual errors into $L^2$-errors under mild assumptions regarding the non-singularity of the covariance matrix, this guarantee can be achieved through various methods such as modified BRM \citep{antos2008learning}. However, it remains unclear how to extend this guarantee to general function approximation settings.

\paragraph{Existing $L^2$-guarantees for off-policy evaluation \citep{huang2022beyond,zhan2022offline}.}

According to \citet{huang2022beyond}, $L^2$-rates were obtained in the context of offline policy evaluation, while \citet{zhan2022offline} obtained the $L^2$-rate for the (regularized) optimal marginal density ratio function. While our approach builds upon these works, it is still considered novel due to the nonlinear constraints we consider, which provide an additional challenge in the analysis. In contrast, the constraints considered by \citet{huang2022beyond,zhan2022offline} were linear.

In order to grasp the difficulty at a high level, suppose that the state space is tabular \footnote{Note our theory still proceeds in the non-tabular case.}. Recall our estimator is motivated by the constrained optimization problem \pref{eq:constraint1}. Although the left-hand side of \pref{eq:constraint1} is convex in $q$, the constraint induced by the equality is generally not convex. As a result, this problem is not a convex optimization problem. Therefore, it is not straightforward whether the saddle point exists when we consider the minimax form. This differs from previous works \citep{huang2022beyond,zhan2022offline}, where the constraint from the linear equality remains convex. 

The key insight for addressing this issue is the realization that the equality constraint ($=$) in \pref{eq:constraint1} can be relaxed to an inequality constraint ($\leq$) in \pref{eq:constraint1}. This relaxation is surprisingly valid in our problem. As a result, the resulting optimization problem is convex, and a saddle point exists. For further discussion, see \pref{sec:intuition}.

\vspace{-0.2cm}
\paragraph{Bellman residual minimization \citep{antos2008learning,chen2019information,chen2022well}.} \citet{chen2019information} derived convergence rates for Q-functions in terms of Bellman residual errors: $\EE_{(s,a)\sim P_b}[(\Bcal^\star q-q)^2(s,a)]$ where $\Bcal^{\star}$ is a Bellman operator. While the convergence of the $L^2$-error implies that of the Bellman residual error since we have 
\begin{align*}
&\EE_{(s,a)\sim P_b}[(\Bcal^{\star} q-q)^2(s,a)]\leq %
(1+\gamma^2 \EE_{(s,a)\sim P_b}[(\pi^{\star}/\pi_b)^2(s,a)])\|q-q^{\star}\|^2_2, 
\end{align*}
the reverse direction does \emph{not} generally hold. Although \citet{chen2022well} demonstrated the reverse direction by postulating several potentially stringent conditions \footnote{These conditions would implicitly impose restriction of the coverage of the offline data.} of offline data regarding OPE, it does not generally hold in the absence of these assumptions. Furthermore, we consider the more challenging setting, offline policy optimization where the Bellman operator is nonlinear. Thus, it is imperative to utilize our specialized objective function to ensure a stronger guarantee in terms of $L^2$-errors.

\vspace{-0.25cm}

\section{Intuition of Why Saddle Points Exist}\label{sec:intuition}

In this section, in the tabular setting, we explain how the original optimization problem is reduced to a convex optimization problem in detail. Note proofs in our theorems do not directly use the facts in this \pref{sec:intuition}. Hence, they hold even in the non-tabular setting.

\subsection{$Q^{\star}$-functions }

We consider the optimization problem: 
\begin{align*}
    \argmin_{q\in \Qcal^{\star '}}0.5\EE_{(s,a)\sim P_b}[q^2(s,a)]
\end{align*}
where $\Qcal^{\star '}$ is a space that consists of $q\in ([\Scal \times \Acal]\to \RR)$ that satisfies
\begin{align*}
    \forall (s,a)\in (\Scal \times \Acal)_b; \mathbb{E}_{s'\sim P(\cdot \mid s,a)}[\gamma \max_{a'\in \Acal} q(s',a')+r-q(s,a)\mid s,a]=0. 
\end{align*}
The solution (on the offline data) is clearly $q^{\star}_{0}$. However, this optimization problem is not a convex optimization problem. Hence, the associated minimax optimization might not have a saddle point. 

Here, we want to claim the following:
\begin{align}\label{eq:convex}
    q^{\star}_{0}(s,a) = \argmin_{q\in \Qcal^{\star '}}0.5\EE_{(s,a)\sim P_b}[q^2(s,a)] (\forall (s,a)\in (\Scal\times \Acal)_b)
\end{align}
where $\Qcal^{\star '}$ is a space that consists of $q\in ([\Scal \times \Acal]\to \RR)$ that satisfies
\begin{align*}
    \forall (s,a)\in (\Scal \times \Acal)_b; \mathbb{E}_{s'\sim P(\cdot \mid s,a)}[\gamma \max_{a'\in \Acal} q(s',a')+r-q(s,a)\mid s,a]\leq 0. 
\end{align*}
This is a convex optimization problem. Hence, we can expect the associated minimax optimization would have a saddle point. In fact, this is proved as follows. 

\begin{lemma}
The equation \pref{eq:convex} holds. 
\end{lemma}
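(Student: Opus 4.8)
The plan is to prove \pref{eq:convex} by showing that relaxing the soft/vanilla Bellman \emph{equality} to the inequality $\EE_{s'\sim P(\cdot\mid s,a)}[\gamma \max_{a'} q(s',a')+r-q(s,a)\mid s,a]\le 0$ enlarges the feasible region to a \emph{convex} set over which $q^{\star}_0$ is still the unique minimizer. First I would record two easy facts. Writing $(\Bcal^{\star}q)(s,a):=\EE[r\mid s,a]+\gamma\,\EE_{s'\sim P(\cdot\mid s,a)}[\max_{a'}q(s',a')]$, the relaxed constraint is exactly $q(s,a)\ge (\Bcal^{\star}q)(s,a)$ on $(\Scal\times\Acal)_b$; since $q\mapsto \EE_{s'}[\max_{a'}q(s',a')]$ is convex (a pointwise maximum of linear maps, then averaged), the constraint map $q-\Bcal^{\star}q$ is concave, so the relaxed feasible set is convex and, paired with the strictly convex objective $\tfrac12\EE_{P_b}[q^2]$, the relaxed program is convex. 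Moreover $q^{\star}_0$ satisfies the original equality on the support, hence the inequality, so it is relaxed-feasible and attains objective value $\tfrac12\EE_{P_b}[(q^{\star})^2]$.

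The crux is a \textbf{domination step}: every relaxed-feasible $q$ satisfies $q(s,a)\ge q^{\star}(s,a)$ on $(\Scal\times\Acal)_b$. I would prove this by a monotone-operator/telescoping argument. Since the optimal policy $\pi^{\star}$ is deterministic, $\max_{a'}q(s',a')\ge q(s',\pi^{\star}(s'))$, so the relaxed constraint gives, on the support, $q(s,a)\ge \EE[r\mid s,a]+\gamma\,\EE_{s'}[q(s',\pi^{\star}(s'))]=(\Bcal^{\pi^{\star}}q)(s,a)$, where $\Bcal^{\pi^{\star}}$ is the policy Bellman operator whose unique fixed point is $q^{\pi^{\star}}=q^{\star}$. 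Iterating this inequality along trajectories generated by $\pi^{\star}$ and using $\gamma<1$ together with boundedness of $q$ yields $q(s,a)\ge \EE_{\pi^{\star}}[\sum_{t}\gamma^t r_t\mid s_0=s,a_0=a]=q^{\star}(s,a)$.

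To conclude, because $R_{\min}\ge 0$ we have $q^{\star}\ge 0$, so on the support $0\le q^{\star}\le q$, whence $(q^{\star})^2\le q^2$ pointwise and $\EE_{P_b}[(q^{\star})^2]\le \EE_{P_b}[q^2]$ for every relaxed-feasible $q$. Thus $q^{\star}_0$ minimizes the relaxed objective; strict convexity of $t\mapsto t^2$ forces equality in the integral only when $q=q^{\star}$ $P_b$-a.e., which gives uniqueness on the support and hence \pref{eq:convex}. The payoff, as the section advertises, is that the relaxed problem is convex, so its min–max Lagrangian form admits a saddle point.

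I expect the domination step to be the main obstacle, precisely because the constraint is imposed only on $(\Scal\times\Acal)_b$ while $\max_{a'}q(s',a')$ and the $\pi^{\star}$-recursion may reference state--action pairs \emph{off} the support, where $q$ is unpenalized and unconstrained and could be driven down to defeat the bound. To make the telescoping legitimate I would invoke the single-policy coverage used throughout the paper, namely $\|d_{\pi^{\star},\mu_0}/P_b\|_{\infty}<\infty$, which forces $\mathrm{supp}(d_{\pi^{\star},\mu_0})\subseteq(\Scal\times\Acal)_b$ and is closed under the $\pi^{\star}$-dynamics, so that each step of the recursion lands on a support point where the relaxed inequality is available; everything else (convexity, feasibility of $q^{\star}_0$, and the final integral comparison) is then routine.
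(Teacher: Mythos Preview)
Your proposal is correct and reaches the same conclusion as the paper, but the domination step $q\ge q^{\star}$ on $(\Scal\times\Acal)_b$ is obtained by a genuinely different device. The paper writes the slack of any relaxed-feasible $q$ as $c(\cdot)\ge 0$, so that $q$ satisfies the Bellman \emph{equality} for the MDP with augmented reward $\tilde r+c$; by the contraction-mapping characterization on $(\Scal\times\Acal)_b$, $q$ is then the optimal $Q$-function for that augmented MDP, and since $c\ge 0$ it dominates $q^{\star}=q(\cdot;0)$. You instead lower-bound $\Bcal^{\star}q\ge\Bcal^{\pi^{\star}}q$ and iterate the sub-solution inequality $q\ge\Bcal^{\pi^{\star}}q$ under the fixed policy $\pi^{\star}$, converging to $q^{\pi^{\star}}=q^{\star}$. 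Your argument is more elementary in that it never re-solves the MDP with a perturbed reward and works directly with monotonicity of the linear operator $\Bcal^{\pi^{\star}}$; the paper's argument is slightly cleaner in that it produces strict domination (and hence uniqueness) in one stroke, whereas you need the additional $q^{\star}\ge 0$ and strict convexity of $t\mapsto t^2$ to close. Both end with the same $\EE_{P_b}[(q^{\star})^2]\le\EE_{P_b}[q^2]$ comparison.

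One small correction to your closure step: the coverage condition you invoke, $\|d_{\pi^{\star},\mu_0}/P_b\|_{\infty}<\infty$, guarantees only that $\pi^{\star}$-trajectories starting from $\mu_0$ remain in $(\Scal\times\Acal)_b$. Your telescoping, however, must start from an \emph{arbitrary} $(s,a)\in(\Scal\times\Acal)_b$, which can lie outside $\mathrm{supp}(d_{\pi^{\star},\mu_0})$. The right hypothesis is $\|d_{\pi^{\star},P_b}/P_b\|_{\infty}<\infty$ (precisely the condition the paper imposes to make $l^{\star}$ well-defined); this makes $(\Scal\times\Acal)_b$ itself closed under the $\pi^{\star}$-dynamics, and then your iteration goes through verbatim. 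The paper's proof has the same implicit need, hidden in the phrase ``contraction mapping theorem on the $L^{\infty}$-space on $(\Scal\times\Acal)_b$.''
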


\begin{proof}
Suppose $q(s,a;c)$ is a minimum $L^2$-norm solution that satisfies 
\begin{align*}
   \forall (s,a) \in (\Scal \times \Acal)_b; \mathbb{E}_{s'\sim P(\cdot \mid s,a)}[\gamma \max_{a'\in \Acal} q(s',a')+r-q(s,a)\mid s,a]=-c(s,a)
\end{align*}
 and $c(s,a)\geq 0$. 

 Suppose $c(s,a)\neq 0$ on some $(s,a) \in (\Scal \times \Acal)_b$. Then, we obtain 
 \begin{align*}
     \forall (s,a) \in (\Scal \times \Acal)_b;   q(s,a;c) =  \EE_{\pi^{\star}(c)}[r_t + c(s_t,a_t) \mid (s_0,a_0)=(s,a)]
 \end{align*}
 where $ \pi^{\star}(c)$ is the optimal policy when the reward is $\tilde r(s,a)+c(s,a)$. This is derived by a contraction mapping theorem on the $L^{\infty}$-space on $(\Scal \times \Acal)_b$.  Here, recall 
 \begin{align*}
    \forall (s,a) \in (\Scal \times \Acal)_b;  q^{\star}(s,a) = q(s,a;0)\geq 0 
 \end{align*}
Furthermore, $ \forall (s,a) \in (\Scal \times \Acal)_b;q(s,a;0)\leq q(s,a;c)$ clearly holds, and, especially, the strict inequality holds on some point in $(\Scal \times \Acal)_b$. This is because if $q(s,a;0)= q(s,a;c)$, we get the contradiction:
\begin{align*}
      \forall (s,a) \in (\Scal \times \Acal)_b; 0 = \mathbb{E}_{s'\sim P(\cdot \mid s,a)}[\gamma \max_{a'\in \Acal} q(s',a')+r-q(s,a)\mid s,a]=c(s,a). 
\end{align*}

However, in this situation, we have 
\begin{align*}
    \EE_{(s,a)\sim P_b}[q^2(s,a;0) ]<   \EE_{(s,a)\sim P_b}[q^2(s,a;c) ]. 
\end{align*}
Thus, it contradicts $q^2(s,a;c)$ takes a least $L^2$-norm. Therefore, it is concldued $c(s,a)=0$ for any $(s,a) \in (\Scal \times \Acal)_b$. Thus, using the contraction mapping theorem, this implies the least $L^2$-norm solution is unique and $q^{\star}(s,a)$ on $(s,a) \in (\Scal \times \Acal)_b$.

\end{proof}

\subsection{Soft $Q$-functions }

We consider the optimization problem: 
\begin{align}\label{eq:convex2}
    \argmin_{q \in \Qcal'} 0.5 \EE_{(s,a)\sim P_b}[q^2(s,a)]
\end{align} 
 where $\Qcal'$ is a space that consists of $q\in ([\Scal \times \Acal]\to \RR)$ that satisfies
 \begin{align}\label{eq:constraint12}
  \forall (s,a)\in (\Scal \times \Acal)_b;  \EE_{s'\sim P(\cdot\mid s,a)}[\gamma \alpha \log \sum_{a'}\{\exp(q(s',a')/\alpha) \pi_b(a'\mid s')\}+r-q(s,a) \mid s,a ]\leq 0. 
\end{align}

\begin{lemma}
Suppose $\alpha \log \|\pi^{\star}/\pi_b\|_{\infty} \leq R_{\min}$. Then, the solution to \pref{eq:convex2} is $q^{\star}_{\alpha}$.  
\end{lemma}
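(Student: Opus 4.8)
The plan is to follow the same route as the preceding $Q^\star$ lemma, replacing the hard-max backup by the soft backup $\Omega_{\alpha,\pi_b}$ and supplying the one genuinely new ingredient: nonnegativity of $q^{\star}_{\alpha}$. First I would reparametrize the feasible set. Any $q\in\Qcal'$ meets the constraint \pref{eq:constraint12} with some slack, i.e. there is a nonnegative $c$ on $(\Scal\times\Acal)_b$ with
\begin{align*}
\EE_{s'\sim P(\cdot\mid s,a)}[\gamma\Omega_{\alpha,\pi_b}(q)(s')+r-q(s,a)\mid s,a]=-c(s,a)\quad\forall (s,a)\in(\Scal\times\Acal)_b.
\end{align*}
Rearranging, this is exactly the soft Bellman fixed-point equation for the shifted reward $r+c$. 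Since $\Omega_{\alpha,\pi_b}$ is a nonexpansion in $\|\cdot\|_{\infty}$ (the log-sum-exp map is $1$-Lipschitz), the corresponding backup is a $\gamma$-contraction on the $L^{\infty}$-space over $(\Scal\times\Acal)_b$, so for each slack profile $c$ the minimum-$L^2$-norm solution $q(\cdot\,;c)$ is uniquely determined on the support, with $q(\cdot\,;0)=q^{\star}_{\alpha}$; off-support values contribute nothing to the $P_b$-weighted objective, so it suffices to show the objective is minimized at $c\equiv 0$.

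Next I would establish monotonicity in the slack. The soft backup is monotone, so raising the reward raises the fixed point, giving $q(\cdot\,;c)\ge q(\cdot\,;0)$ pointwise whenever $c\ge 0$. Subtracting the two fixed-point equations at any point with $c(s,a)>0$ gives
\begin{align*}
q(s,a;c)-q(s,a;0)=c(s,a)+\gamma\EE_{s'}[\Omega_{\alpha,\pi_b}(q(\cdot\,;c))(s')-\Omega_{\alpha,\pi_b}(q(\cdot\,;0))(s')]\ge c(s,a)>0,
\end{align*}
so the inequality is strict precisely on $\{c>0\}$, which lies in the support and hence carries positive $P_b$-mass in the tabular setting.

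The step I expect to be the main obstacle — and the only place the soft case departs from the $Q^\star$ case, where nonnegativity was immediate from $R_{\min}\ge 0$ — is proving $q^{\star}_{\alpha}\ge 0$ using the hypothesis $\alpha\log\|\pi^{\star}_{\alpha}/\pi_b\|_{\infty}\le R_{\min}$ (Assumption~\ref{assum:scale}). I would use the soft value identity $\Omega_{\alpha,\pi_b}(q^{\star}_{\alpha})(s)=\EE_{a\sim\pi^{\star}_{\alpha}}[q^{\star}_{\alpha}(s,a)-\alpha\log(\pi^{\star}_{\alpha}(a\mid s)/\pi_b(a\mid s))]$, which follows by direct substitution of $\pi^{\star}_{\alpha}=\mathrm{softmax}(q^{\star}_{\alpha}/\alpha+\log\pi_b)$. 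Unrolling the soft Bellman equation then expresses $q^{\star}_{\alpha}(s,a)$ as the $\pi^{\star}_{\alpha}$-expected discounted sum of $r_0$ together with the effective per-step rewards $r_t-\alpha\log(\pi^{\star}_{\alpha}(a_t\mid s_t)/\pi_b(a_t\mid s_t))$ for $t\ge 1$. Each such term is at least $R_{\min}-\alpha\log\|\pi^{\star}_{\alpha}/\pi_b\|_{\infty}\ge 0$ by the hypothesis, and $r_0\ge R_{\min}\ge 0$, so $q^{\star}_{\alpha}\ge 0$.

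Finally I would combine the pieces: whenever $c\not\equiv 0$ we have $q(\cdot\,;c)\ge q(\cdot\,;0)=q^{\star}_{\alpha}\ge 0$ pointwise with strict inequality on a set of positive $P_b$-measure, so squaring and integrating yields $\EE_{(s,a)\sim P_b}[q(\cdot\,;c)^2]>\EE_{(s,a)\sim P_b}[(q^{\star}_{\alpha})^2]$. Thus every feasible $q$ with nontrivial slack is strictly suboptimal, the minimizer of \pref{eq:convex2} is attained at $c\equiv 0$, and that solution equals $q^{\star}_{\alpha}$ on $(\Scal\times\Acal)_b$. The two points requiring care are the nonnegativity argument and verifying that the strict-inequality set has positive $P_b$-mass, since both are needed for the squared-norm comparison to be strict.
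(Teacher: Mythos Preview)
Your proposal is correct and follows essentially the same route as the paper's proof: reparametrize by the slack $c\ge 0$, invoke the $\gamma$-contraction of the soft backup to get a unique $q(\cdot\,;c)$ on the support with $q(\cdot\,;0)=q^{\star}_{\alpha}$, establish $q^{\star}_{\alpha}\ge 0$ from the hypothesis (the paper defers this to the first step of the proof of \pref{lem:saddle}, which is exactly your unrolling argument), and then use monotonicity plus strict inequality on $\{c>0\}$ to conclude that any nontrivial slack strictly increases the $L^2$ norm. The only cosmetic difference is that the paper obtains the strict inequality by contradiction (if $q(\cdot\,;c)=q(\cdot\,;0)$ everywhere then $c\equiv 0$), whereas you subtract the two fixed-point equations directly; both are fine.
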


\begin{proof}
Suppose $q_{\alpha}(s,a;c)$ is a minimum $L^2$-norm solution that satisfies 
\begin{align*}
   \forall (s,a) \in (\Scal \times \Acal)_b; \mathbb{E}_{s'\sim P(\cdot \mid s,a)}[\gamma \alpha \log(\sum_{a'} \exp(q(s',a')/\alpha)\pi_b(a'\mid s')+r-q(s,a)\mid s,a]=-c(s,a)
\end{align*}
 and $c(s,a)\geq 0$. 

 Suppose $c(s,a)\neq 0$ on some $(s,a) \in (\Scal \times \Acal)_b$. Then, we obtain 
 \begin{align*}
     \forall (s,a) \in (\Scal \times \Acal)_b;   q_{\alpha}(s,a;c) =  \EE_{\pi^{\star}_{\alpha}(c)}[r_t + c(s_t,a_t) \mid (s_0,a_0)=(s,a)]
 \end{align*}
 where $ \pi^{\star}_{\alpha}(c)$ is the optimal policy when the reward is $\tilde r(s,a)+c(s,a)$. This is derived by a contraction mapping theorem on the $L^{\infty}$-space on $(\Scal \times \Acal)_b$ \footnote{Recall the soft Bellman operator is a contraction mapping in $L^{\infty}$-space.  }.Here, recall 
 \begin{align*}
    \forall (s,a) \in (\Scal \times \Acal)_b;  q^{\star}_{\alpha}(s,a) = q_{\alpha}(s,a;0)\geq 0 
 \end{align*}
 The proof is performed as in the first step of the proof of \pref{lem:saddle}. We use the assumption in this step. 
 
 Furthermore, $ \forall (s,a) \in (\Scal \times \Acal)_b;q_{\alpha}(s,a;0)\leq q_{\alpha}(s,a;c)$ clearly holds, and, especially, the strict inequality holds on some point in $(\Scal \times \Acal)_b$. This is because if $q_{\alpha}(s,a;0)= q_{\alpha}(s,a;c)$, we get the contradiction:
\begin{align*}
      \forall (s,a) \in (\Scal \times \Acal)_b; 0 = \mathbb{E}_{s'\sim P(\cdot \mid s,a)}[\gamma \alpha \log(\sum_{a'} \exp(q(s',a')/\alpha)\pi_b(a'\mid s')+r-q(s,a)\mid s,a]=c(s,a). 
\end{align*}

However, in this situation, we have 
\begin{align*}
    \EE_{(s,a)\sim P_b}[q^2_{\alpha}(s,a;0) ]<   \EE_{(s,a)\sim P_b}[q^2_{\alpha}(s,a;c) ]. 
\end{align*}
Thus, it contradicts $q^2_{\alpha}(s,a;c)$ takes a least $L^2$-norm. Therefore, it is concluded $c(s,a)=0$ for any $(s,a) \in (\Scal \times \Acal)_b$. Thus, using the contraction mapping theorem, this implies the least $L^2$-norm solution is unique and $q^{\star}_{\alpha}(s,a)$ on $(s,a) \in (\Scal \times \Acal)_b$.

\end{proof}

\section{Proof in \pref{sec:identification}}

\subsection{Proof of \pref{lem:adjoint}}

We define an $L^2$-space $H'$ with no weight on the support of $(\Scal \times \Acal)_b$. In other words, we define 
\begin{align*}
    \langle q_1, q_2 \rangle_{\Hcal'} &\coloneqq \int q_1(s,a)q_2(s,a)\mathrm{I}(P_b(s,a)>0)\mathrm{d}(s,a). 
\end{align*}
Then, we have 
\begin{align*}
   \langle  l^{\star}_{\alpha} , (I- \gamma P^{\star}_{\alpha}) q \rangle_{\Hcal}  &= \langle  P_b l^{\star}_{\alpha} , (I - \gamma P^{\star}_{\alpha}) q \rangle_{\Hcal'}  \\ 
    &= \langle  (I-\gamma \{P^{\star}_{\alpha}\}^{\top})^{-1} (P_bq^{\star}_{\alpha}),(I - \gamma P^{\star}_{\alpha}) q \rangle_{\Hcal'} \\
    &= \langle  (I-\gamma \{P^{\star}_{\alpha}\}^{\top})(I-\gamma \{P^{\star}_{\alpha}\}^{\top})^{-1}(P_b q^{\star}_{\alpha}), q \rangle_{\Hcal'} \\
    &= \langle P_b q^{\star}_{\alpha} ,q \rangle_{\Hcal'}= \langle q^{\star}_{\alpha}, q \rangle_{\Hcal}. 
\end{align*}
Here, letting $P^{\star}_{\alpha}(s',a' \mid s,a)=P(s'\mid s,a)\pi^{\star}(a'\mid s')$ we use 
\begin{align*}
    &\langle g_1,  P^{\star}_{\alpha}g_2 \rangle_{\Hcal'}\\
    &=\int g_1(s,a)g_2(s',a')P^{\star}_{\alpha}(s',a' \mid s,a)\mathrm{I}(P_b(s,a)>0)\mathrm{d}\mu(s,a,s',a') \tag{Definition} \\ 
    &=\int g_1(s,a)g_2(s',a')P^{\star}_{\alpha}(s',a' \mid s,a)\mathrm{I}(P_b(s,a)>0)\mathrm{I}(P_b(s',a')>0)\mathrm{d}\mu(s,a,s',a') \tag{Recall we assume $\|\pi^{\star}/\pi_b\|\leq \infty$. } \\ 
    &=\langle \{P^{\star}_{\alpha}\}^{\top}g_1, g_2 \rangle_{\Hcal'}. 
\end{align*}
Note $\langle g_1,  P^{\star}_{\alpha}g_2 \rangle_{\Hcal}=\langle \{P^{\star}_{\alpha}\}^{\top}g_1, g_2 \rangle_{\Hcal}$ does \emph{not} generally hold.

\subsection{Proof of \pref{lem:saddle} } 
It is obvious that 
\begin{align*}
   \forall l\in \Hcal; L_{\alpha}(q^{\star}_{\alpha},l^{\star}_{\alpha})= L_{\alpha}(q^{\star}_{\alpha},l).
\end{align*}
Hence, we show 
\begin{align*}
     \forall q\in \Hcal; L_{\alpha}(q,l^{\star}_{\alpha}) \geq L_{\alpha}(q^{\star}_{\alpha},l^{\star}_{\alpha})
\end{align*}

To achieve this goal, we show 
\begin{align}\label{eq:important}
0.5\EE_{(s,a)\sim P_b}[(q-q^{\star}_{\alpha})(s,a)^2]\leq L_{\alpha}(q,l^{\star}_{\alpha})-L_{\alpha}(q^{\star}_{\alpha},l^{\star}_{\alpha}). 
\end{align} 
Using \pref{lem:adjoint}, a function $l^{\star}_{\alpha}$ satisfies a key adjoint property: 
\begin{align}\label{eq:adjoint_propety}
 \EE_{(s,a,s')\sim P_b,a'\sim \pi^{\star}_{\alpha}(s
')}[l^{\star}_{\alpha}(s,a)\{\gamma q(s',a')-q(s,a) \} ]= -\EE_{(s,a,s')\sim P_b,a'\sim \pi^{\star}_{\alpha}(s
')}[q^{\star}_{\alpha}(s,a)q(s,a) ]. 
\end{align}
for any $q \in \Hcal$. 

\paragraph{First Step: Show $l^{\star}_{\alpha}(s,a)\geq 0$ and  $q^{\star}_{\alpha}(s,a)\geq 0$ for any $(s,a)\in \Scal \times \Acal$.}

Recall 
\begin{align*}
    q^{\star}_{\alpha}(s,a)=\EE_{\pi^{\star}_{\alpha}}[r_0+\sum_{t=1}^{\infty}\{\gamma^t r_t -\alpha \gamma^t \log \pi^{\star}_{\alpha}(a_t\mid s_t)/\pi_b(a_t\mid s_t)\}\mid s_0=s,a_0=a]
\end{align*}
where the inside of the expectation is taken under a policy $\pi^{\star}_{\alpha}$. Hence, 
\begin{align}
    q^{\star}_{\alpha}(s,a)\geq \gamma(1-\gamma)^{-1}(R_{\min} - \alpha \log(\|\pi^{\star}_{\alpha}/\pi_b\|_{\infty})). 
\end{align}
Therefore, using Assumption~\ref{assum:scale}, we have 
$$q^{\star}_{\alpha}(s,a)\geq 0$$ for any $(s,a)\in \Scal \times \Acal$. 
Finally, from the definition of $l^{\star}_{\alpha}$, we obtain $$l^{\star}_{\alpha}(s,a)\geq 0$$ for any $(s,a)\in \Scal \times \Acal$. 

\paragraph{Second Step: Show $0.5\EE_{(s,a)\sim P_b}[(q-q^{\star}_{\alpha})(s,a)^2]\leq L_{\alpha}(q,l^{\star}_{\alpha})-L_{\alpha}(q^{\star}_{\alpha},l^{\star}_{\alpha})$ for any $q \in \Hcal$. }

Then, for any $q \in \Hcal$, we have 
\begin{align*}
   &L_{\alpha}(q,l^{\star}_{\alpha})- L_{\alpha}(q^{\star}_{\alpha},l^{\star}_{\alpha})\\
   &=0.5 \EE_{(s,a) \sim P_b}[q^2(s,a)-q^{\star}_{\alpha}(s,a)^2] + \\
   &+ \EE_{(s,a,s')\sim P_b}[l^{\star}_{\alpha}(s,a)\{\gamma \alpha \log \sum_{a'}\{\exp(q(s',a')/\alpha)\pi_b(a'\mid s')\}-q(s,a) \}  ] \\
   &-\EE_{(s,a,s')\sim P_b}[l^{\star}_{\alpha}(s,a)\{\gamma \alpha\log \sum_{a'} \{ \exp(q^{\star}_{\alpha}(s',a')/\alpha)\pi_b(a'\mid s')\}-q^{\star}_{\alpha}(s,a) \}  ]  \\
   &\geq 0.5 \EE_{(s,a) \sim P_b}[q^2(s,a)-q^{\star}_{\alpha}(s,a)^2]  + \\
   &+\EE_{(s,a,s')\sim P_b}\left[l^{\star}_{\alpha}(s,a) \gamma \frac{\sum_{a'} \exp(q^{\star}_{\alpha}(s',a')/\alpha)\pi_b(a'\mid s')\{q(s',a')-q^{\star}_{\alpha}(s',a')\}}{\sum_{a'} \exp(q^{\star}_{\alpha}(s',a')/\alpha)\pi_b(a'\mid s')  } \right]  \tag{Convexity }  \\
   &+\EE_{(s,a) \sim P_b}[-l^{\star}_{\alpha}(s,a)q(s,a) + l^{\star}_{\alpha}(s,a)q^{\star}_{\alpha}(s,a) ] \\
   &=0.5 \EE_{(s,a) \sim P_b}[q^2(s,a)-q^{\star}_{\alpha}(s,a)^2]+  \\
   &+ \EE_{(s,a,s')\sim P_b,a'\sim \pi^{\star}_{\alpha}(s
')}[l^{\star}_{\alpha}(s,a)\{\gamma q(s',a')-q(s,a) - \gamma q^{\star}_{\alpha}(s',a') + q^{\star}_{\alpha}(s,a) \} ]. 
\end{align*}
In the convexity part, what we use is 
\begin{align*}
   f(x)-f(y)\geq \sum_{i=1}^{|\Acal|} \frac{\partial f}{\partial y_i}(x_i-y_i),\quad f(x)=\alpha \log(\sum_{i=1}^{|\Acal|} \exp(x_i/\alpha)). 
\end{align*}
and $$\forall (s,a)\in (\Scal \times \Acal)_b; l^{\star}_{\alpha}(s,a)\geq 0.$$ Finally, by using an adjoint propety of $l^{\star}_{\alpha}$ in \pref{eq:adjoint_propety}, 
\begin{align*}
 & L_{\alpha}(q,l^{\star}_{\alpha})- L_{\alpha}(q^{\star}_{\alpha},l^{\star}_{\alpha})  \\ 
 & \geq 0.5 \EE_{(s,a) \sim P_b}[q^2(s,a)-q^{\star}_{\alpha}(s,a)^2]-\EE_{(s,a) \sim P_b}[q^{\star}_{\alpha}(s,a)\{q(s,a)-q^{\star}_{\alpha}(s,a)\}] \\ 
   &=0.5 \EE_{(s,a) \sim P_b}[\{q(s,a)-q^{\star}_{\alpha}(s,a) \}^2]. 
\end{align*}
Then, \pref{eq:important} is concluded. 

\subsection{Proof of \pref{lem:saddle2}  }   

We denote the solution of $\argmin_{q \in \Qcal}\sup_{l \in \Lcal}L_{\alpha}(q,l)$ by $\hat q$. Let 
\begin{align*}
    \hat l(q) = \argmax_{l \in \Lcal}L_{\alpha}(q,l), \quad \hat l= \hat l(\hat q). 
\end{align*}
Then, we have 
\begin{align*}
    &L_{\alpha}(\hat q,l^{\star}_{\alpha})-L_{\alpha}( q^{\star}_{\alpha},l^{\star}_{\alpha})\\ 
    &=  \underbrace{-L_{\alpha}( q^{\star}_{\alpha},l^{\star}_{\alpha}) + L_{\alpha}(q^{\star}_{\alpha},\hat l(q^{\star}_{\alpha}))}_{(a)}\underbrace{-  L_{\alpha}(q^{\star}_{\alpha}, \hat l(q^{\star}_{\alpha})) +  L_{\alpha}(\hat q, \hat l)}_{(c)} \underbrace{- L_{\alpha}(\hat q, \hat l)+ L_{\alpha}(\hat q, l^{\star}_{\alpha})}_{(d)} \\
    &\leq 0. 
\end{align*}
In (a), we use the property of the saddle point in \pref{lem:saddle}. In (c) and (e), we use the definition of estimators. Hence, 
\begin{align*}
   0&\leq 0.5 \EE_{(s,a) \sim P_b}[\{q(s,a)-q^{\star}_{\alpha}(s,a) \}^2]  \\ 
   &\leq L_{\alpha}(\hat q,l^{\star}_{\alpha})-L_{\alpha}( q^{\star}_{\alpha},l^{\star}_{\alpha}) \tag{Recall \pref{eq:important}} \\
   &\leq 0. 
\end{align*}
This concludes the statement. 

\subsection{Proof of \pref{thm:convergence_soft} }

In this proof, the expectation is taken with respect to the offline data. We define 
\begin{align*}
\hat L_{\alpha}(q,w) &= \EE_n[0.5 q^2(s,a)+ l(s,a)\{r+\alpha \gamma \log(\sum_{a'} \exp(q(s',a')/\alpha)\pi_b(a'\mid s') )-q(s,a)\}], \\
\hat l(q) &= \argmax_{l\in \Lcal}\EE_n[0.5 q^2(s,a)+ l(s,a)\{r+\alpha \gamma \log(\sum_{a'}\exp(q(s',a')/\alpha)\pi_b(a'\mid s') )-q(s,a)\}],\\
\hat l &= \hat l(\hat q_{\alpha}). 
\end{align*}
We use the following decomposition:
\begin{align*}
    L_{\alpha}(\hat q_{\alpha},l^{\star}_{\alpha})&=  \underbrace{-L_{\alpha}( q^{\star}_{\alpha},l^{\star}_{\alpha}) + L(q^{\star}_{\alpha},\hat l(q^{\star}_{\alpha}))}_{(a)} \underbrace{-L_{\alpha}(q^{\star}_{\alpha}, \hat l(q^{\star}_{\alpha}))+\hat L_{\alpha}(q^{\star}_{\alpha}, \hat l(q^{\star}_{\alpha}))}_{(b)} \\ 
    &\underbrace{- \hat L_{\alpha}(q^{\star}_{\alpha}, \hat l(q^{\star}_{\alpha})) + \hat L_{\alpha}(\hat q_{\alpha}, \hat l)}_{(c)} \underbrace{-\hat L_{\alpha}(\hat q_{\alpha}, \hat l)+ \hat L_{\alpha}(\hat q_{\alpha}, l^{\star}_{\alpha})}_{(d)}\underbrace{-\hat L_{\alpha}(\hat q_{\alpha}, l^{\star}_{\alpha})+ L_{\alpha}(\hat q_{\alpha},l^{\star}_{\alpha})}_{(e)}. 
\end{align*}
Here, terms (a) is than $0$ using the saddle point property in \pref{lem:saddle}. Term (c) and (d) are less than $0$ using the construction of estimators. Besides, using Hoeffding's inequality, with probability $1-\delta$, terms (b) and (e) are less than $$c \prns{  \Bcal^2_{\Qcal}+\Bcal_{\Qcal}\Bcal_{\Lcal} + \gamma \Bcal_{\Lcal}\{\alpha+\alpha \ln(|\Acal|)\}} \sqrt{\frac{\ln(|\Qcal||\Lcal|/\delta)}{n}}.$$ 
Hereafter, we condition on this event. 
Here, we use 
\begin{align*}
    &|0.5 q^2(s,a)+ l(s,a)\{r+\alpha \gamma \log(\sum_{a'} \exp(q(s',a')/\alpha)\pi_b(a'\mid s') )-q(s,a)\}| \\ 
    &\leq  |\alpha l(s,a)\gamma \log(\sum_{a'} \exp(q(s',a')/\alpha)\pi_b(a'\mid s')) | + |0.5 q^2(s,a) + l(s,a)\{r-q(s,a)\}|\\
    &\leq 0.5 \Bcal^2_{\Qcal}+\Bcal_{\Lcal}\{R_{\max}+\Bcal_{\Qcal}\}+ \gamma \Bcal_{\Lcal}\alpha \max_{a'}(\Bcal_{\Qcal}/\alpha +\pi_b(a'\mid s') )+\gamma \Bcal_{\Lcal} \ln(|\Acal|) \tag{Use \pref{lem:useful_inequality} and $l(s,a)\geq 0$ for any $(s,a)$.}\\
    &\lesssim \Bcal^2_{\Qcal}+ \Bcal_{\Lcal}\Bcal_{\Qcal}+  \gamma \Bcal_{\Lcal}\{ \alpha+\alpha \ln(|\Acal|)\}. 
\end{align*}

Therefore, 
\begin{align*}
& 0.5\EE_{(s,a) \sim P_b}[\{\hat q_{\alpha}(s,a)-q^{\star}_{\alpha}(s,a) \}^2] \\
&\leq L(l^{\star}_{\alpha},\hat q_{\alpha})-L(l^{\star}_{\alpha},q^{\star}_{\alpha})  \tag{Recall \pref{eq:important}}\\
& \leq c \prns{  \Bcal^2_{\Qcal}+ \Bcal_{\Lcal}\Bcal_{\Qcal}+ \Bcal_{\Lcal}\gamma \{\alpha+\alpha  \ln(|\Acal|)\}} \sqrt{\frac{\ln(|\Qcal||\Lcal|/\delta)}{n}}. 
\end{align*}
In the first inequality, we use \pref{eq:important}. 

\subsection{Proof of \pref{thm:convergence_q}}

In this proof, the expectation is taken w.r.t. the offline data. We define 
\begin{align*}
    L_0(q,l) &= \EE[0.5 q^2(s,a)+ l(s,a)\{r+\gamma \max_{a'}(q(s',a')- q(s,a)\}], \\  
\hat L_0(q,l) &= \EE_n[0.5 q^2(s,a)+ l(s,a)\{r+\gamma \max_{a'} q(s',a') -q(s,a)\}], \\
\hat l(q) &= \argmax_{l}\EE_n[0.5 q^2(s,a)+ l(s,a)\{r+\max_{a'}q(s',a')-q(s,a)\}],\\
\hat l &= \hat l(\hat q_0). 
\end{align*}

\paragraph{Part1: Show $l^{\star}(s,a)\geq 0$ for any $(s,a) \in \Scal \times \Acal$.}

First, since $R_{\min}\geq 0$, we have $q^{\star}(s,a)\geq 0$. Then, recalling the definition:
$$l^{\star} \coloneqq \{(I-\gamma \{P^{\star}\}^{\top})^{-1 }(q^{\star}P_{\pi_b})\}/P_{\pi_b},$$ 
we obtain $l^{\star}(s,a)\geq 0$ for any $(s,a) \in \Scal \times \Acal$. 

\paragraph{Part 2: Showing $ 0.5\EE_{(s,a)\sim P_b}[(q-q^{\star})(s,a)^2]\leq L_0(q,l^{\star})-L_0(q^{\star},l^{\star}). $ } 

Note $l^{\star}(s,a)$ satisfies 
\begin{align}\label{eq:adjoint2}
 \EE_{(s,a,s')\sim P_b,a'\sim \pi^{\star}(s
')}[l^{\star}(s,a)\{\gamma q(s',a')-q(s,a) \} ]= -\EE_{(s,a,s')\sim P_b,a'\sim \pi^{\star}(s
')}[q^{\star}(s,a)q(s,a) ]. 
\end{align}
for any $q(s,a) \in \Qcal$. 

Then, for any $q \in \Qcal$, we have 
\begin{align*}
   &L_0(q,l^{\star})- L_0(q^{\star},l^{\star})\\
   &=0.5 \EE_{(s,a) \sim P_b}[q^2(s,a)-q^{\star}(s,a)^2] +\EE_{(s,a,s')\sim P_b}[l^{\star}(s,a)\{\gamma \max_{a'} q(s',a')-q(s,a) \}  ] \\
   &-\EE_{(s,a,s')\sim P_b}[l^{\star}(s,a)\{\gamma  \max_{a'} q^{\star}(s',a')-q^{\star}(s,a) \}  ]  \\
   &\geq 0.5 v\EE_{(s,a) \sim P_b}[q^2(s,a)-q^{\star}(s,a)^2]  +\EE_{(s,a,s')\sim P_b}[l^{\star}(s,a) \{\gamma q(s',\pi^{\star}(s))-\gamma q^{\star}(s',\pi^{\star}(s))\} ]  \tag{Convexity }  \\
   &+\EE_{(s,a) \sim P_b}[-l^{\star}(s,a)q(s,a) + l^{\star}(s,a)q^{\star}(s,a) ] \\
   &=0.5 \EE_{(s,a) \sim P_b}[q^2(s,a)-q^{\star}(s,a)^2]+  \\
   &+ \EE_{(s,a,s')\sim P_b,a'\sim \pi^{\star}(s
')}[l^{\star}(s,a)\{\gamma q(s',a')-q(s,a) - \gamma q^{\star}(s',a') + q^{\star}(s,a) \} ]. 
\end{align*}
In the convexity part, what we use is 
\begin{align*}
   \max_i \{f(x_i)\}- \max_{i} \{f(y_i)\} \geq f(x_{\argmax_{i} \{f(y_i)\} })-\max_{i} \{f(y_i)\} 
\end{align*}
and 
\begin{align*}
   \forall (s,a)\in (\Scal \times \Acal)_b; l^{*}(s,a)\geq 0.
\end{align*}
Finally, by using the adjoint property \pref{eq:adjoint2}, 
\begin{align*}
 &L_0(q,l^{\star})- L_0(q^{\star},l^{\star})\\
  &\geq 0.5 \EE_{(s,a) \sim P_b}[q^2(s,a)-q^{\star}(s,a)^2]-\EE_{(s,a) \sim P_b}[q^{\star}(s,a)\{q(s,a)-q^{\star}(s,a)\}] \\ 
   &=0.5 \EE_{(s,a) \sim P_b}[\{q(s,a)-q^{\star}(s,a) \}^2]. 
\end{align*}

\paragraph{Part 3: Showing the final bound.}

We use the following decomposition: 
\begin{align*}
    L_0(\hat q_0,l^{\star})&=  \underbrace{-L( q^{\star},l^{\star}) + L_0(q^{\star},\hat l(q^{\star}))}_{(a)} \underbrace{-L_0(q^{\star}, \hat l(q^{\star}))+\hat L_0(q^{\star}, \hat l(q^{\star}))}_{(b)} \\ 
    &\underbrace{- \hat L_0(q^{\star}, \hat l(q^{\star})) + \hat L_0(\hat q_0, \hat l)}_{(c)} \underbrace{-\hat L_0(\hat q_0, \hat l)+ \hat L_0(\hat q_0, l^{\star})}_{(d)}\underbrace{-\hat L_0(\hat q_0, l^{\star})+ L_0(\hat q_0,l^{\star})}_{(e)}. 
\end{align*}

Here, (a) is equal to $0$ since $q^{\star}$ satisfies the Bellman equation. Besides, using Hoeffding's inequality, with probability $1-\delta$, terms (b) and (e) are less than $$c \prns{  \Bcal^2_{\Qcal}+ \Bcal_{\Lcal}\Bcal_{\Qcal}} \sqrt{\frac{\ln(|\Qcal||\Lcal|/\delta)}{n}}.$$ Terms (c) and (d) are greater than $0$ using the construction of estimators. 
This implies 
\begin{align*}
0.5\EE_{(s,a) \sim P_b}[\{\hat q_0(s,a)-q^{\star}(s,a) \}^2]
&\leq L_0(\hat q_0,l^{\star})- L_0(q^{\star},l^{\star}) \tag{Second step.}\\
&\leq c\prns{  \Bcal^2_{\Qcal}+  \Bcal_{\Lcal}\Bcal_{\Qcal}}\sqrt{\frac{\ln(|\Qcal||\Lcal|/\delta)}{n}}. 
\end{align*}

\section{Proof in \pref{sec:theory}}\label{sec:detail_proof}

\subsection{Proof of \pref{thm:main}}

Using \pref{thm:convergence_soft}, with probability $1-\delta$, the following holds 
\begin{align}\label{eq:l2_error_softmax}
    &\EE_{(s,a) \sim P_b}[\{\hat q_{\alpha}(s,a)-q^{\star}_{\alpha}(s,a)\}^2 ] \lesssim \mathrm{Error},  \\
     & \mathrm{Error} =\prns{  \Bcal^2_{\Qcal}+ \Bcal_{\Qcal}\Bcal_{\Lcal}+ \Bcal_{\Lcal}\gamma \{\alpha+\alpha  \ln(|\Acal|)\}} \sqrt{\frac{\ln(|\Qcal||\Lcal|/\delta)}{n}}.  \nonumber
\end{align}
Hereafter, we condition on this event. Then, letting $\hat \pi_{\alpha}(a\mid s)=\mathrm{softmax}(\hat q_{\alpha}/\alpha + \log \pi_b)$, we have  
\begin{align*}
    J(\pi^{\star}_{\alpha})-J(\hat \pi_{\alpha})&\leq (1-\gamma)^{-1}R_{\max} \EE_{s \sim d_{\pi^{\star}_{\alpha}}}[\sum_{a} |\pi^{\star}_{\alpha}(a\mid s)-\hat \pi_{\alpha}(a \mid s)| ] \tag{Performance difference lemma in \pref{lem:performance}} \\
     &\leq (1-\gamma)^{-1}R_{\max}\EE_{s \sim d_{\pi^{\star}_{\alpha}}}\bracks{\sqrt{|\Acal|}\prns{\sum_{a}\{\pi^{\star}_{\alpha}(a\mid s)-\hat \pi_{\alpha}(a\mid s)\}^2}^{1/2}} \tag{CS inequality} \\ 
    &= (1-\gamma)^{-1}R_{\max}\EE_{s \sim d_{\pi^{\star}_{\alpha}}}\bracks{\sqrt{|\Acal|}\prns{\sum_{a: \pi_b(a \mid s)>0}\{\pi^{\star}_{\alpha}(a\mid s)-\hat \pi_{\alpha}(a\mid s)\}^2}^{1/2}}.
\end{align*}
From the second line to the third line, we use a relation that $\pi^{\star}_{\alpha},\hat \pi_{\alpha}$, and $\pi_b$ have the same support. 

Now, we want to connect this bound with $L^2$-error of Q-functions. This is possible since the softmax function with Lipschitz constant $1/\alpha$ is Lipschitz continuous with constant $1/\alpha$ in \citet[Proposition 4]{gao2017properties}. Hence,  the right hand side is upper-bounded by 
\begin{align*}
(1-\gamma)^{-1}\sqrt{|\Acal|}/\alpha R_{\max} \EE_{s \sim d_{\pi^{\star}_{\alpha}}}\bracks{\braces{\sum_{a: \pi_b(a \mid s)>0}\{\hat q_{\alpha}(s,a)-q^{\star}_{\alpha}(s,a)\}^2}^{1/2} }. 
\end{align*}
From Jensen's inequality, this is further upper-bounded by 
\begin{align*}
  (1-\gamma)^{-1}\sqrt{|\Acal|}/\alpha R_{\max} \EE_{s \sim d_{\pi^{\star}_{\alpha}}}\bracks{\sum_{a: \pi_b(a \mid s)>0}\{\hat q_{\alpha}(s,a)-q^{\star}_{\alpha}(s,a)\}^2}^{1/2}.  
\end{align*}
Then, using the definition of $\pi^{\diamond}_b$, this is upper-bounded by 
\begin{align*}
 (1-\gamma)^{-1}|\Acal|/\alpha R_{\max} \EE_{s \sim d_{\pi^{\star}_{\alpha}},a\sim \pi^{\diamond}_b(\cdot \mid s)}\bracks{\{\hat q_{\alpha}(s,a)-q^{\star}_{\alpha}(s,a)\}^2}^{1/2} 
\end{align*}
Finally, by combining this result with \pref{eq:l2_error_softmax}, we get the final guarantee.

\subsection{Proof of Theorem~\ref{cor:main}}

We have  
\begin{align}\label{eq:helpful_inequality}
    J(\pi)-J_{\alpha}(\pi)=R_{\max} \alpha (1-\gamma)^{-1}\EE_{s\sim d_{\pi},a\sim \pi(s)}[\log \pi(a\mid s)/\pi_b(a\mid s)]\geq 0
\end{align}
for any $\pi$. Then, 
\begin{align*}
    J(\pi^{\star})-J(\pi^{\star}_{\alpha}) &=J(\pi^{\star})-J_{\alpha}(\pi^{\star})+\underbrace{J_{\alpha}(\pi^{\star})-J_{\alpha}(\pi^{\star}_{\alpha})}_{(a)}+\underbrace{J_{\alpha}(\pi^{\star}_{\alpha})-  J(\pi^{\star}_{\alpha})}_{(b)}\\
    &\leq J(\pi^{\star})-J_{\alpha}(\pi^{\star}). 
\end{align*}
Note the term (a) is less than $0$ since $\pi^{\star}_{\alpha}$ is the optimal softmax policy and the term (b) is less than $0$ using \pref{eq:helpful_inequality}. Furthermore, 
\begin{align*}
    & J(\pi^{\star})-J_{\alpha}(\pi^{\star})= (1-\gamma)^{-1}\alpha R_{\max} \EE_{s\sim d_{\pi^{\star}},a\sim \pi^{\star}(s)}[\log \pi^{\star}(a\mid s)/\pi_b(a\mid s)] \\
    &\leq (1-\gamma)^{-1}\alpha R_{\max} \log C_0. 
\end{align*}
Therefore, by combining with \pref{thm:main}, we have
\begin{align*}
    J(\pi^{\star})-J(\hat \pi_{\alpha})&\leq c R_{\max}\frac{|\Acal| C^{1/2}_{\Qcal,d_{\pi^{\star}_{\alpha},\mu_0} }\{\Bcal_{\Qcal} + \gamma \Bcal_{\Lcal}\{\Bcal_{\Qcal} 
 \alpha +\ln(|\Acal|)\}\}^{1/2} 
    \{\ln(|\Qcal||\Lcal|/\delta)\}^{1/4}}{(1-\gamma)\alpha n^{1/4}}  \\
    &+ c\frac{R_{\max} \alpha \log C_0}{1 -\gamma}. 
\end{align*}
The sample complexity is easily obtained from this result.

\section{ Proof in \pref{sec:margin}}

From \pref{sec:detail_proof}, with probability $1-\delta$, the following holds 
\begin{align}\label{eq:l2_error}
    &\EE_{(s,a) \sim P_b}[\{\hat q(s,a)-q^{\star}(s,a)\}^2 ]\lesssim \mathrm{Error}, \\ 
    &\mathrm{Error}=\prns{  \Bcal^2_{\Qcal}+ \gamma \Bcal_{\Lcal}\Bcal_{\Qcal}} \sqrt{\frac{\ln(|\Qcal||\Lcal|/\delta)}{n}}. \nonumber
\end{align}
Hereafter, we condition on this event. Then, letting $$\hat \pi(a\mid s)=\argmax_{a \in \Acal:\pi_b(a\mid s)>0} \hat q(s,a),$$ we have  
\begin{align*}
    &(1-\gamma)^2 R_{\max}^{-1}\{J(\pi^{\star})-J(\hat \pi)\} \\
    &\leq \EE_{s \sim d_{\pi^{\star}}}[\mathrm{I}(\pi^{\star}(s)\neq \hat \pi(s)) ] \tag{Performance difference lemma in \pref{lem:performance}} \\
     &\leq \EE_{s \sim d_{\pi^{\star}}}\bracks{\sum_{a':a'\sim \pi_b(a\mid s)>0} \mathrm{I}(\hat q(s,a')-\hat q(s,\pi^{\star}(s))\geq 0\,\&\,q^{\star}(s,a')-q^{\star}(s,\pi^{\star}(s))< 0 )  }. 
\end{align*}
Hence, we have 
\begin{align*}
    &\EE_{s \sim d_{\pi^{\star}}}\bracks{\sum_{a':a'\sim \pi_b(a\mid s)>0} \mathrm{I}(\hat q(s,a')-\hat q(s,\pi^{\star}(s))\geq 0\,\&\,q^{\star}(s,a')-q^{\star}(s,\pi^{\star}(s))< 0 )  } \\
    &\leq \EE_{s \sim d_{\pi^{\star}}}\bracks{\sum_{a':a'\sim \pi_b(a\mid s)>0} \mathrm{I}(0> q^{\star}(s,a')-q^{\star}(s,\pi^{\star}(s))\geq -t )  } + \\
     & + \EE_{s \sim d_{\pi^{\star}}}\bracks{\sum_{a':a'\sim \pi_b(a\mid s)>0} \mathrm{I}(\hat q(s,a')-\hat q(s,\pi^{\star}(s))-q^{\star}(s,a')+q^{\star}(s,\pi^{\star}(s))\geq t )  }
\end{align*}
In the first term, we can use a margin assumption:
\begin{align*}
    \EE_{s \sim d_{\pi^{\star}}}\bracks{\sum_{a':a'\sim \pi_b(a\mid s)>0} \mathrm{I}(0> q^{\star}(s,a')-q^{\star}(s,\pi^{\star}(s))\geq -t )  }\leq c|\Acal|(t/t_0)^{\beta}. 
\end{align*}
In the second term, we can use 
\begin{align*}
    &\EE_{s \sim d_{\pi^{\star}}}\bracks{\sum_{a':a'\sim \pi_b(a\mid s)>0} t^2 \mathrm{I}(\hat q(s,a')-\hat q(s,\pi^{\star}(s))-q^{\star}(s,a')+q^{\star}(s,\pi^{\star}(s))\geq t )  } \\
    &\leq \EE_{s \sim d_{\pi^{\star}}}\bracks{\sum_{a':a'\sim \pi_b(a\mid s)>0} |\hat q(s,a')-\hat q(s,\pi^{\star}(s))-q^{\star}(s,a')+q^{\star}(s,\pi^{\star}(s))|^2 } \\
  &\leq 2\EE_{s \sim d_{\pi^{\star}}}\bracks{\sum_{a':a'\sim \pi_b(a\mid s)>0} |\hat q(s,a')-q^{\star}(s,a')|^2_2 + |\hat q(s,\pi^{\star}(s))-q^{\star}(s,\pi^{\star}(s))|^2 }\\ 
  &\leq 2|\Acal| \EE_{s \sim d_{\pi^{\star}},a\sim \pi^{\diamond}_b(s)}[|\hat q(s,a')-q^{\star}(s,a')|^2] +2|\Acal| \EE_{s \sim d_{\pi^{\star}},a\sim \pi^{\star}(s)}[|\hat q(s,a)-q^{\star}(s,a)|^2]. 
\end{align*}
Therefore, 
\begin{align*}
&(1-\gamma)^2 R_{\max}^{-1}\{J(\pi^{\star})-J(\hat \pi)\}\\
&\leq c_2 \{ |\Acal|(t/t_0)^{\beta} +t^{-2}\{ 2 |\Acal| \EE_{s \sim d_{\pi^{\star}},a\sim \pi^{\diamond}_b(s)}[|\hat q(s,a')-q^{\star}(s,a')|^2]  \\
& +2|\Acal| \EE_{s \sim d_{\pi^{\star}},a\sim \pi^{\star}(s)}[|\hat q(s,a)-q^{\star}(s,a)|^2]\} \}  \\
&\leq c_3 \{ |\Acal| t^{-2\beta/(2+\beta)}_0\{\EE_{s \sim d_{\pi^{\star}},a\sim \pi^{\diamond}_b(s)}[|\hat q(s,a')-q^{\star}(s,a')|^2]^{\beta/(2+\beta)} \\ 
&+  \EE_{s \sim d_{\pi^{\star}},a\sim \pi^{\star}(s)}[|\hat q(s,a)-q^{\star}(s,a)|^2]^{\beta/(2+\beta)}  \} \}. 
\end{align*}
Hence, 
\begin{align*}
    (1-\gamma)^2R_{\max}^{-1}\{J(\pi^{\star})- J(\hat \pi)\}\leq c|\Acal|t^{-2\beta/(2+\beta)}_0\prns{C_{\Qcal,d_{\pi^{\star}_0,\mu_0} }C_0\prns{  \Bcal^2_{\Qcal}+ \gamma \Bcal_{\Lcal}\Bcal_{\Qcal}} 
    \{\ln(|\Qcal||\Lcal/\delta)/n\}^{1/2}    }^{\beta/(2+\beta)}. 
\end{align*}
This concludes the statement by some algebra.

\section{Auxiliary Lemmas}

We prove two auxiliary lemmas used in the proof. 

\begin{lemma}[Performance Difference Lemma ]\label{lem:performance}
\begin{align*}
    J(\pi)-J(\pi')&=(1-\gamma)^{-1}\EE_{s \sim d_{\pi}}[\langle \pi(\cdot \mid s)-\pi'(\cdot \mid s),Q^{\pi'}(s,\cdot)\rangle  ] \\
      &\leq (1-\gamma)^{-2}R_{\max} \EE_{s \sim d_{\pi}}[ \|\pi(\cdot \mid s)-\pi'(\cdot \mid s)\|_1 ]. 
\end{align*}    
\end{lemma}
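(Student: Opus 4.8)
The plan is to prove this Performance Difference Lemma by the standard telescoping argument, deriving the equality first and then obtaining the stated bound by H\"older's inequality. Throughout, write $V^{\pi'}(s)=\langle \pi'(\cdot\mid s),Q^{\pi'}(s,\cdot)\rangle$ for the value function under $\pi'$, and recall $J(\pi)=\EE_{s_0\sim\mu_0}[V^{\pi}(s_0)]$ together with the fact that the normalized occupancy satisfies $d_{\pi,\mu_0}(s,a)=(1-\gamma)\sum_{t=0}^{\infty}\gamma^t\PP_{\pi}(s_t=s,a_t=a)$.

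First I would express the difference as an expectation over trajectories generated by $\pi$ starting from $s_0\sim\mu_0$. Writing $J(\pi)=\EE_{\pi}[\sum_{t}\gamma^t\tilde r(s_t,a_t)]$ and inserting the telescoping identity
\begin{align*}
-V^{\pi'}(s_0)=\EE_{\pi}\Bigl[\sum_{t=0}^{\infty}\gamma^t\bigl(\gamma V^{\pi'}(s_{t+1})-V^{\pi'}(s_t)\bigr)\Bigr],
\end{align*}
which holds because the partial sums collapse to $\gamma^{T+1}V^{\pi'}(s_{T+1})-V^{\pi'}(s_0)$ and the first term vanishes as $T\to\infty$, I obtain
\begin{align*}
J(\pi)-J(\pi')=\EE_{\pi}\Bigl[\sum_{t=0}^{\infty}\gamma^t\bigl(\tilde r(s_t,a_t)+\gamma V^{\pi'}(s_{t+1})-V^{\pi'}(s_t)\bigr)\Bigr].
\end{align*}

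Next I would take the conditional expectation given $(s_t,a_t)$ and use the Bellman consistency equation $\EE[\tilde r(s_t,a_t)+\gamma V^{\pi'}(s_{t+1})\mid s_t,a_t]=Q^{\pi'}(s_t,a_t)$, turning the summand into the advantage $Q^{\pi'}(s_t,a_t)-V^{\pi'}(s_t)$. Converting the discounted trajectory sum into an expectation under the normalized occupancy measure $d_{\pi,\mu_0}$ contributes the factor $(1-\gamma)^{-1}$, and since $V^{\pi'}(s)=\langle\pi'(\cdot\mid s),Q^{\pi'}(s,\cdot)\rangle$ the per-state advantage equals $\langle\pi(\cdot\mid s)-\pi'(\cdot\mid s),Q^{\pi'}(s,\cdot)\rangle$, which gives the claimed equality. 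For the inequality I would then apply H\"older's inequality state-wise, $\langle\pi(\cdot\mid s)-\pi'(\cdot\mid s),Q^{\pi'}(s,\cdot)\rangle\le\|\pi(\cdot\mid s)-\pi'(\cdot\mid s)\|_1\,\|Q^{\pi'}(s,\cdot)\|_{\infty}$, together with the uniform bound $\|Q^{\pi'}\|_{\infty}\le(1-\gamma)^{-1}R_{\max}$ (valid since $r\in[R_{\min},R_{\max}]$), yielding the extra $(1-\gamma)^{-1}R_{\max}$ factor.

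The computation is essentially routine; the only points needing care are justifying the interchange of the infinite sum with the expectation and the vanishing of $\gamma^{T}V^{\pi'}(s_T)$, both of which follow immediately from the uniform bound $\|V^{\pi'}\|_{\infty}\le(1-\gamma)^{-1}R_{\max}$ and dominated convergence. Hence I do not anticipate any substantive obstacle beyond bookkeeping.
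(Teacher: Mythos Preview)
Your proposal is correct and follows the standard telescoping derivation of the performance difference lemma. The paper states this lemma as an auxiliary result without providing an explicit proof, so your argument is exactly the expected one and there is nothing to compare against.
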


\begin{lemma}[LogSumExp Inequality]\label{lem:useful_inequality}
    \begin{align*}
        \max\{x_1,\cdots,x_{|\Acal|}\} \leq \log(\sum_{i=1}^{|\Acal|} \exp(x_i))\leq  \max\{x_1,\cdots,x_{|\Acal|}\} + \ln(|\Acal|). 
    \end{align*}
\end{lemma}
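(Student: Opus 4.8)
The plan is to set $m \coloneqq \max\{x_1,\dots,x_{|\Acal|}\}$ and prove the two inequalities separately by exploiting monotonicity of $\exp$ and $\log$ together with crude bounds on the finite sum. Both directions reduce to bounding $\sum_{i=1}^{|\Acal|}\exp(x_i)$ from below and above by multiples of $\exp(m)$, after which applying $\log$ (which is increasing) yields the claim.

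For the lower bound, I would note that since every summand $\exp(x_i)$ is nonnegative and $m$ equals $x_j$ for some index $j$, the sum dominates a single term:
\begin{align*}
\sum_{i=1}^{|\Acal|}\exp(x_i)\;\geq\;\exp(x_j)\;=\;\exp(m).
\end{align*}
Taking $\log$ of both sides and using monotonicity gives $\log\sum_{i=1}^{|\Acal|}\exp(x_i)\geq m$, which is the left inequality.

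For the upper bound, I would use that $x_i\leq m$ for every $i$, so by monotonicity of $\exp$ each term satisfies $\exp(x_i)\leq\exp(m)$, and summing the $|\Acal|$ terms yields
\begin{align*}
\sum_{i=1}^{|\Acal|}\exp(x_i)\;\leq\;|\Acal|\,\exp(m).
\end{align*}
Applying $\log$ and the identity $\log(|\Acal|\exp(m))=m+\ln(|\Acal|)$ gives the right inequality. Combining the two displays establishes the chain in the statement.

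This lemma is entirely elementary, so there is no genuine obstacle; the only point requiring a moment of care is that the argument is valid for the natural logarithm used throughout (so that $\log\exp(m)=m$), and that the lower bound uses nonnegativity of the remaining summands rather than any structure of the $x_i$. I would keep the proof to these few lines.
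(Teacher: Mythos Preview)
Your proof is correct and is the standard elementary argument for the LogSumExp inequality. The paper itself states this lemma without proof, so there is nothing further to compare.
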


\end{document}